\newcommand{\safeinput}[1]{%
  \IfFileExists{#1}{\input{#1}}{%
    \makebox[\textwidth][c]{\textit{Missing external table file}}%
  }%
}
\newcommand{\safeincludegraphics}[2][]{%
  \IfFileExists{#2}{\includegraphics[#1]{#2}}{%
    \fbox{\parbox[c][0.22\textheight][c]{0.8\linewidth}{\centering Missing figure file:\\ \texttt{\detokenize{#2}}}}%
  }%
}
\definecolor{rbblue}{RGB}{52,104,176}
\definecolor{rbgreen}{RGB}{55,145,110}
\definecolor{rborange}{RGB}{206,128,44}
\definecolor{rbpurple}{RGB}{118,88,164}
\definecolor{rbred}{RGB}{185,75,75}
\definecolor{rbgray}{RGB}{80,80,80}
\newcolumntype{L}{>{\raggedright\arraybackslash}X}
\renewcommand{\arraystretch}{1.4}
\theoremstyle{plain}
\newtheorem{theorem}{Theorem}
\newtheorem{lemma}{Lemma}
\theoremstyle{definition}
\theoremstyle{definition}
\newtheorem{assumption}{Assumption}
\theoremstyle{remark}
\newtheorem{remark}{Remark}
\newcommand{\bx}{\boldsymbol{x}}
\newcommand{\by}{\boldsymbol{y}}
\newcommand{\bR}{\mathbb{R}}
\newcommand{\NN}{\mathrm{NN}}
\newcommand{\cX}{\mathcal{X}}
\newcommand{\barcX}{\overline{\mathcal{X}}}
\newcommand{\barbeta}{\bar{\beta}}
\newcommand{\R}{\mathbb{R}}
\newcommand{\cF}{\mathcal{F}}
\begin{document}

\title{A Reverse-BSDE Diffusion Sampler}
\author{%
Jairon H. N. Batista$^{1}$,\quad Fl\'avio B. Gon\c{c}alves$^{2}$\\
Yuri F. Saporito$^{1}$,\quad Rodrigo S. Targino$^{1}$\\[0.6em]
\small $^{1}$School of Applied Mathematics, FGV EMAp, Funda\c{c}\~ao Getulio Vargas\\
\small $^{2}$Department of Statistics, Institute of Exact Sciences, Universidade Federal de Minas Gerais\\[0.4em]
\small \href{mailto:jairon.batista@fgv.edu.br}{jairon.batista@fgv.edu.br};
\href{mailto:yuri.saporito@fgv.br}{yuri.saporito@fgv.br};
\href{mailto:rodrigo.targino@fgv.br}{rodrigo.targino@fgv.br}\\
\small \href{mailto:fbgoncalves@ufmg.br}{fbgoncalves@ufmg.br}
}
\date{}
\maketitle

\begin{abstract}
Diffusion-based generative models have renewed interest in stochastic differential equation methods for sampling from complex distributions. We study a setting in which the target density is known only up to a normalizing constant and reformulate the reverse-time diffusion sampler as a forward-backward stochastic differential equation (FBSDE). This formulation replaces the separate pre-estimation of the time-dependent score with the solution of a coupled stochastic system. We prove the equivalence of these formulations and provide a decomposition of the approximation error arising from initializing the sampler with a standard Gaussian, applying Euler discretization to the dynamics, and solving the FBSDE approximately. We then evaluate the proposed algorithm on synthetic targets, including separated mixtures, anisotropic Gaussian distributions, and banana-shaped and ring-shaped distributions. The results demonstrate the promise of the method, particularly for targets with complex global structure.
\end{abstract}

\medskip
\noindent\textbf{MSC 2020:} 65C30, 60H10 (primary); 65C05, 68T07 (secondary).\\
\noindent\textbf{Keywords:} Reverse-time diffusion; forward-backward stochastic differential equation; Monte Carlo sampling; diffusion model; deep BSDE method.

\section{Introduction}

Generative models have emerged as powerful tools for learning complex data distributions, with applications spanning from image synthesis and natural language processing to scientific modeling (\cite{song2021}, \cite{brown2020languagemodelsfewshotlearners}). Among the diverse approaches, diffusion-based generative models stand out due to their ability to generate structured data from simple noise distributions. These models rely on stochastic processes to transform simple reference distributions into data-like distributions, effectively capturing multimodal structures that are challenging to represent with traditional methods.

A foundational work in this domain is \cite{song2021}, which established a framework for training diffusion models by solving stochastic differential equations (SDEs) to approximate the underlying data distribution. This approach reframes the problem as one of learning a ``score function'' (i.e. the gradient of the data distribution's log density) using the reverse SDE studied in \cite{ANDERSON1982}. Although diffusion-based generative models have been highly successful, most research inspired by this work has focused on improving the learning process, leaving the potential of these models for tackling sampling problems underexplored.

In this work, we take a different perspective by shifting the focus from ``learning'' the data distribution to ``sampling'' from it. Specifically, we reinterpret the generative process as a forward-backward stochastic differential equation (FBSDE); for an overview of the area, see for instance \cite{BSDEbook}. This reformulation allows us to directly address the problem of sampling from distributions whose densities are only known up to a normalizing constant, a scenario frequently encountered in Bayesian statistics and other probabilistic domains.

To make this approach practical, we propose solving the FBSDE numerically using deep learning \cite{DeepBSDE}. By parametrizing the forward and backward components of the equation with neural networks, we develop a method that approximates the sampling process for multimodal and geometrically nontrivial target distributions. This approach provides a FBSDE-based route to training diffusion samplers directly from an unnormalized target density, offering a complementary perspective on the use of diffusion processes for statistical sampling.


There are a few works exploring the use of this framework for sampling, as, for instance, \cite{ReverDiffusionMonteCarlo} and \cite{DiffusionPINN}. In both studies, the reverse SDE is employed to sample from a distribution known up to a normalizing constant. The main challenge lies in estimating the score function. In \cite{ReverDiffusionMonteCarlo}, the authors reformulate the score function as an expectation with respect to an appropriate distribution and then use Monte Carlo methods (such as Importance Sampling and Langevin Dynamics) to approximate this expectation. Meanwhile, in \cite{DiffusionPINN}, the author estimates the score function by solving the log-Fokker--Planck equation that the score satisfies, leveraging numerical methods for PDEs based on deep learning \cite{PINN}.

A closely related line of work is based on controlled diffusions and Schrödinger--Föllmer flows. In particular, \cite{vargas2023bayesian} formulate Bayesian inference through a finite-horizon stochastic-control problem, parametrizing the Föllmer drift with a neural network in order to construct a controlled diffusion whose terminal distribution approximates the posterior. This perspective is related to ours in that both approaches use finite-horizon stochastic dynamics with a learned drift or control, rather than relying on a target distribution as the invariant law of a long-run Markov chain. However, our construction is different: we start from the reverse-time diffusion associated with the noising process and identify the unknown time-dependent score with the $Z$ process of a coupled FBSDE. Thus, rather than optimizing a Schrödinger--Föllmer control objective, we train the reverse sampler through FBSDE residuals tied to the terminal unnormalized log-density.

Our contributions can be summarized as providing a novel perspective on diffusion-based generative models by re-framing the generative process as an FBSDE for the sampling problem. We also distinguish between the exact continuous-time formulation and the practical finite-horizon sampler. While the exact reverse process is initialized from the finite-time noised distribution, the implementable sampler starts from a standard Gaussian approximation. We then derive an error decomposition that separates the effects of this initialization bias, the Euler discretization of the dynamics, and the approximate numerical solution of the FBSDE. Finally, we discuss training diagnostics inspired by recent deep BSDE methods. The numerical experiments evaluate the method on a range of synthetic targets, showing its ability to capture multimodal, anisotropic, banana-shaped, and ring-shaped distributions.

\section{Reverse diffusions}

In this section we review the methodology of \cite{song2021}, which is based on the well-known result of \cite{ANDERSON1982}, when applied to the problem of sampling from $p_0(\bx) = \pi(\bx)/c$, where $c$ is an unknown constant, but $\pi$ is computable at any $\bx \in \bR^n$. 

Under suitable regularity conditions on $p_0$, which will be discussed below, the authors of \cite{Haussmann1986} stated that if $\beta:[0,T] \to \bR_{>0}$ is a scalar, continuous function and $(\mathcal{X}_t)_{t \geq 0}$ is the following diffusion process,
\begin{align} \label{eq:forward}
\begin{cases}
d\cX_t = -\tfrac{1}{2}\beta^2(t)\cX_t dt + \beta(t)dW_t, \\
\cX_0 \sim p_0,
\end{cases}
\end{align}
where $(W_t)_{t \geq 0}$ is a $n$-dimensional Brownian motion independent of $\cX_0$, then the reverse process $\barcX_t = \cX_{T-t}$ is also a diffusion process, and satisfies
\begin{align}\label{eq:reverse}
d \barcX_t
={}& \Bigl(\tfrac{1}{2}\barbeta^2(t)\barcX_t
      + \barbeta^2(t)\nabla\log p(T-t,\barcX_t)\Bigr)dt + \barbeta(t)d\overline{W}_t,
\end{align}
where $\barbeta(t) = \beta(T-t)$, $p(t,\cdot)$ is the density of $\cX_t$, $(\overline{W}_t)_{t \geq 0}$ is another $n$-dimensional Brownian motion, and $\barcX_0 \sim p(T,\cdot)$ is independent of $(\overline{W}_t)_{t \geq 0}$. More complex dynamics of $\cX$ could be considered, as stated in Appendix~\ref{app:reverse}.
 
Based on this result, the idea of the sampling algorithm in \cite{song2021} is that, if we sample $\barcX_0$ from a standard Gaussian, then, for large $T$, $\barcX_T$ will be approximately distributed as $p_0$, since the limiting distribution, when $t \to +\infty$, of the process $(\cX_t)_{t \geq 0}$ is standard Gaussian. The main challenge in applying this procedure is to estimate the score function $\nabla \log p$ that appears in the drift of \eqref{eq:reverse}. In \cite{song2021}, under a different setting where i.i.d.\ samples from $p_0$ are available, the authors reformulate the $L^2$-loss for score estimation so that only samples from $p_0$ are required.

For our setting, when $p_0$ is computable up to a normalizing constant, one needs to consider a different approach. Note that, under regularity conditions, $p$ is the unique smooth solution of the Fokker--Planck PDE:
\begin{align}\label{eq:FP}
\partial_t p(t,\bx)
- \frac{1}{2}\beta^2(t)
\Bigl[&\bx\cdot\nabla p(t,\bx)
      + n\,p(t,\bx) + \Delta p(t,\bx)\Bigr]
= 0,
\end{align}
with $p(0,\bx) = p_0(\bx)$. 

We are then ready to state the following standard assumptions, which will be in force throughout the paper:

\begin{assumption}[Integrability of \(p_0\)]
\label{ass:integrability}
There exists \(\lambda>0\) such that
\[
\int_{\mathbb R^n}
\frac{p_0(x)^2}{(1+\lVert x\rVert^2)^\lambda}
\,\mathrm dx
<\infty.
\]
\end{assumption}

\begin{assumption}[Smoothness and positivity]
\label{ass:smoothness}
The Fokker--Planck equation~\eqref{eq:FP}, with initial
condition \(p_0\), admits a unique solution
\[
p\in C^{1,2}\bigl([0,T]\times\mathbb R^n\bigr),
\]
and \(p(t,x)>0\) for every
\((t,x)\in[0,T]\times\mathbb R^n\).
\end{assumption}

\begin{assumption}[Lipschitz continuity of the score]
\label{ass:lipschitz-control}
There exists \(L_s<\infty\) such that
\[
\lVert \bar\beta(t)\nabla\log p(T-t,x)-\bar\beta(t)\nabla\log p(T-t,y)\rVert
\le L_s\lVert x-y\rVert
\]
for every \(t\in[0,T]\) and \(x,y\in\mathbb R^n\).
\end{assumption}

\begin{remark}
Assumption~\ref{ass:integrability} holds if $p_0$ is bounded, while Assumption~\ref{ass:smoothness} holds if $p_0$ is bounded and $C^2$. The validity of Assumptions~\ref{ass:integrability} and~\ref{ass:smoothness} for $p_0$ is equivalent to their validity for $\pi$.
\end{remark}

Assumption~\ref{ass:integrability} together with the continuity of $\beta : [0,T] \rightarrow \R_{>0}$ guarantees the well-posedness of Equation~\eqref{eq:reverse}, which is discussed in Appendix~\ref{app:reverse}.

One could pursue a numerical solution of this Fokker--Planck PDE to estimate the score $\nabla \log p$. However, as the normalizing constant $c$ of the density $p_0$ is unknown, it is not directly possible to compute the initial condition associated with the Fokker--Planck PDE in \eqref{eq:FP}. In order to overcome this issue, we consider a reparametrization of the problem and the PDE associated with this transformation of $p$. Namely, we study the equation satisfied by
\begin{equation}\label{eq:u}
   u(t,\bx) = \log p(T-t, \bx) - \frac{n}{2} \int_{0}^{T-t} \beta^2(s) ds + \log c,
\end{equation}
which is given by \textit{semi-linear} PDE:
\begin{align}\label{eq:uPDE}
&\partial_t u(t,\bx) + \barbeta^2(t)
\left[\frac{1}{2}\bx+\nabla u(t,\bx)\right]
\cdot\nabla u(t,\bx) + \frac{1}{2}\barbeta^2(t)\Delta u(t,\bx)
- \frac{1}{2}\|\barbeta(t)\nabla u(t,\bx)\|^2
=0,
\end{align}
with final condition $u(T,\bx) = \log \pi(\bx)$. The first important characteristic of the PDE satisfied by $u$ is that its boundary condition depends only on known functions (no dependence on $c$). Second, the way the terms are arranged matches the drift and the volatility of the reverse diffusion.

\subsection{Other approaches}

In \cite{DiffusionPINN} the authors explore numerical solutions to the non-linear PDE in \eqref{eq:uPDE}. With this approximation for $u$, one can compute its gradient (the score) and simulate the reverse diffusion starting from a standard Gaussian distribution, generating approximate samples from $p_0$.

Another way to represent the score, when, for simplicity, $\beta^2 \equiv 2$, was given in \cite{ReverDiffusionMonteCarlo}:
\begin{align*}
\nabla\log p(t,\bx)
={}&
\frac{
\mathbb E_{\bx_0\sim\rho_t(\cdot\mid\bx)}
\left[
\dfrac{e^{-t}\bx_0-\bx}{1-e^{-2t}}\,\pi(\bx_0)
\right]
}{
\mathbb E_{\bx_0\sim\rho_t(\cdot\mid\bx)}
\left[\pi(\bx_0)\right]
},
\end{align*}
where
\begin{align*}
\rho_t(\bx_0\mid\bx)
&\propto
\exp\left(
-\frac{\|\bx-e^{-t}\bx_0\|^2}
       {2(1-e^{-2t})}
\right).
\end{align*}
The authors of \cite{ReverDiffusionMonteCarlo} then use either unadjusted Langevin Dynamics or self-normalized importance sampling to estimate the score.

\section{Reverse-BSDE diffusion sampler}

Unlike the aforementioned references, we propose here a novel approach to explore the PDE in \eqref{eq:uPDE} by recasting it as a \textit{forward-backward stochastic differential equation (FBSDE)}; see Appendix~\ref{app:bsde} for a primer on BSDEs.

We start by defining 
\begin{align}
\begin{cases}
X_t^\star &= \barcX_t, \quad Y_t^\star = u(t,\barcX_t),  \\ 
Z_t^\star &= \barbeta(t)\nabla u(t,\barcX_t).  
\end{cases}\label{eq:X_star}
\end{align}
Under Assumption~\ref{ass:smoothness}, a direct application of It\^o's formula and the use of Equation~\eqref{eq:uPDE} give us
\begin{align*}
dY_t^\star &= \partial_t u(t,X_t^\star)dt + \frac{1}{2} \Delta u(t,X_t^\star)  d\langle X^\star \rangle_t + \nabla u(t,X_t^\star) \cdot dX_t^\star \\
&= \Big(\partial_t u(t,X_t^\star) + \frac{1}{2} \Delta u(t,X_t^\star) \barbeta^2(t) + \nabla u(t,X_t^\star) \cdot \frac{1}{2} \barbeta^2(t) X_t^\star + \barbeta^2(t) \nabla u(t,X_t^\star)\Big)dt \\
&+ \barbeta(t) \nabla u(t,X_t^\star) \cdot d\overline{W}_t =\frac{1}{2} \|Z^\star_t\|^2 dt + Z_t^\star \cdot d\overline{W}_t,
\end{align*}
where the reader is reminded that $\overline{W}$ is the Brownian motion related to the reverse diffusion.
Therefore, $(X_t^\star, Y_t^\star, Z_t^\star)$ solves the FBSDE:
\begin{align}\label{eq:bsde}
\begin{cases}
d X_t = (\tfrac{1}{2}\barbeta^2(t) X_t + \barbeta(t) Z_t) dt + \barbeta(t) d\overline{W}_t, \\
d Y_t =  \frac{1}{2} \|Z_t\|^2 dt + Z_t \cdot d\overline{W}_t, \\ 
Y_T = \log \pi(X_T), \mbox{ and } X_0 = \barcX_0 = \cX_T.
\end{cases}    
\end{align}
The question that arises now is whether any solution of \eqref{eq:bsde} is a solution of the reverse SDE \eqref{eq:reverse}, giving uniqueness of solution for this FBSDE. We discuss this question in detail below. A positive answer allows us to remove the need to pre-estimate the score function $\nabla \log p$ prior to simulating the dynamics of $X$. Consequently, sampling from $p_0$ in the exact continuous-time formulation is equivalent to solving the associated FBSDE with the correct initial law $X_0\sim \mathcal L(\mathcal X_T)$. 

In practice this law is replaced by a Gaussian approximation; the resulting finite-time initialization bias is made explicit in Theorem~\ref{thm:tv-z-chen}. Moreover, the FBSDE formulation avoids the need to specify a domain over which the score is approximated, although numerical training still benefits from a broad domain.

We are then ready to state our main result, which supports the use of the FBSDE in \eqref{eq:bsde} to generate samples from $p_0$.
\begin{theorem}\label{thm:bsde}
Suppose Assumptions~\ref{ass:integrability}--\ref{ass:lipschitz-control} hold. Let $\mathcal F_t=\sigma(\barcX_0,\overline W_s:0\leq s\leq t)$, completed in the usual way. Then the FBSDE \eqref{eq:bsde}, initialized with $X_0=\barcX_0=\mathcal X_T$, admits a unique strong solution adapted to $(\mathcal F_t)_{0\leq t\leq T}$ among solutions $(X,Y,Z)$ for which
\[
\eta_t=\frac12\left(\bar\beta(t)\nabla u(t,X_t)-Z_t\right)
\]
satisfies 
\begin{align}
\begin{cases}
\displaystyle \mathbb{E}\left[e^{\tfrac{1}{2}\int_0^T \|\eta_t\|^2dt}\right] < +\infty \quad \text{(Novikov condition);}\\[10pt]
\displaystyle \mathbb{E}\left[e^{\int_0^T \eta_t \cdot d\overline{W}_t - \tfrac{1}{2}\int_0^T \|\eta_t\|^2dt} \int_0^T \|\eta_t\|^2dt \right] < +\infty.
\end{cases}\label{eq:admissibility}
\end{align}
Therefore, the unique solution is given by \((X_t^\star, Y_t^\star, Z_t^\star)_{t \in [0,T]}\) defined in \eqref{eq:X_star} and, in particular, $X_T^\star \sim p_0$.
\end{theorem}


Proving uniqueness under the standard theory of BSDEs is not straightforward, as the generator function exhibits quadratic growth and the terminal condition is possibly unbounded. For further non-standard results on uniqueness of BSDEs, see \cite{Uniqueness1, Uniqueness2, Uniqueness3}. To maintain the flow of the paper, we present the proof for Theorem~\ref{thm:bsde} in Appendix~\ref{app:uniqueness}.


\section{The algorithm}

Notice that, by Theorem~\ref{thm:bsde}, $X_T^\star = \barcX_T^\star = \cX_0^\star \sim p_0$ for the exact FBSDE initialized with $X_0^\star\sim\mathcal L(\mathcal X_T^\star)$. The implementable sampler replaces this law by $N(0,I_n)$. Therefore, to explore our FBSDE formulation of the reverse SDE given in
Equation~\eqref{eq:bsde}, we need to discuss the simulation of fully coupled
FBSDEs. 

The dependence of the drift of \(X\) on the backward \(Z\) process
makes the simulation of this equation particularly challenging, since the
forward paths cannot be generated independently of the backward component.
There are several numerical methods for this general class of equations; see,
for instance, the survey in \cite{surveyBSDE}. The present FBSDE has
additional features that make standard approaches difficult to apply directly.
In particular, the backward driver has quadratic growth in \(Z\), so
Picard-type schemes designed for Lipschitz generators or weakly coupled systems
are not directly applicable without additional truncation, localization, or
structural assumptions, \cite{BenderZhang2008,ImkellerDosReisZhang2010,Richou2011}. One might also try to use the exponential transform
commonly employed for quadratic BSDEs, but in the present setting this would
replace the terminal condition \(Y_T=\log \pi(X_T)\) by
\(\exp(-Y_T)=1/\pi(X_T)\), which is numerically undesirable because it
amplifies low-density regions and is less stable than working with the
unnormalized log-density. These considerations motivate a deep-learning
approximation of the $Y$ and $Z$ processes.

The numerical design follows two lessons from recent deep FBSDE practice. First, for coupled FBSDEs, a small training loss alone does not provide a reliable guarantee of sampling accuracy; validation should be performed on fresh simulated trajectories, as emphasized in convergence analyses of deep BSDE/FBSDE methods \cite{han2020convergence}. Second, the Schrödinger bridge FBSDE literature treats forward and backward processes as coupled objects and monitors whether the learned dynamics match the prescribed endpoint laws \cite{chen2022likelihood}. 

Our implementation is inspired by Deep BSDE methods, but it does not use the classical terminal-only loss from \cite{DeepBSDE}. Instead, it learns a time-dependent approximation of the value process and enforces the BSDE identity through one-step and short rollout residuals, \cite{hure2020deep, germain2022approximation}. Let $0=t_0<\cdots<t_N=T$ be a uniform partition of $[0,T]$, with $\Delta t = T/N$ and $\barbeta_{t_i}=\beta(T-t_i)$. We parametrize $Y_t = u(t,X_t)$ by a neural network $u_\theta(t,x)$ imposing the terminal condition structurally through
\begin{align}
u_\theta(t,x) = \log \pi(x) + (T-t)\,\NN_\theta(t,x),
\end{align}
where $\NN_\theta$ is the output of a neural network with weights and biases $\theta$. Hence $u_\theta(T,x)=\log \pi(x)$ exactly at terminal time for every $x$. The network takes as inputs the state $x$, and we add several features of the normalized time $t/T$ and of $x$. 

We approximate $Z$ using the parametrization suggested by Theorem~\ref{thm:bsde}:
\begin{align}
z_\theta(t,x)=
\barbeta(t)\,\nabla_x u_\theta(t,x).
\end{align}
Because the backward driver has quadratic growth in \(Z\), large values of
its approximation may destabilize both the simulated dynamics and the
residual loss. Projection-based truncations of the \(Z\)-argument are used in
numerical schemes for quadratic BSDEs; see
\cite{ChassagneuxRichou2016}. A related
truncated formulation is also considered in the analysis of a Deep BSDE
method for quadratic Hamilton--Jacobi--Bellman equations; see
\cite{ZhengEtAl2026}.

Motivated by these truncation approaches, we apply the smooth component-wise
transformation
\begin{align}
\widehat z_\theta(t,x)= c \,\tanh\!\left(\frac{z_\theta(t,x)}{c}\right),
\end{align}
where the clipping level $c$ is updated adaptively from a high empirical quantile of the current $z_\theta$ magnitudes.

Unlike gradient clipping in neural-network optimization, this transformation
acts directly on the estimated BSDE $Z$ process, rather than on
gradients with respect to the network parameters. The particular smooth
truncation and the adaptive quantile-based selection of \(c\) are
implementation choices of the present work.

Given a time-state pair $(\tau,\chi)$ and a Brownian increment $\Delta W \sim N(0,\Delta t\,I_n)$, we sample $X$ using an Euler step
\begin{align}\label{eq:iteration}
X
={}& \chi
+ \left(\frac{1}{2}\barbeta^2(\tau)\chi
+ \barbeta(\tau)\widehat z_\theta(\tau,\chi)\right)\Delta t + \barbeta(\tau)\Delta W,
\end{align}
and define the one-step BSDE residual
\begin{align}
&\mathcal R(\theta;\tau,\chi,\Delta W)
= u_\theta(\tau + \Delta t,X)
 - u_\theta(\tau,\chi) - \frac{1}{2}\|\widehat z_\theta(\tau,\chi)\|^2 \Delta t
 - \widehat z_\theta(\tau,\chi)\cdot \Delta W .
\end{align}
The one-step residual gives the local loss
\begin{align}
\mathcal L_{\mathrm{local}}(\theta)
=\frac{1}{m}\sum_{i=1}^m
\left|\mathcal R(\theta;\tau_i,\chi_i,\Delta W_i)\right|^2,
\end{align}
where $\{\Delta W_i\}_{i=1}^m$ is an iid sample of $N(0,\Delta t\,I_n)$ and $\{\tau_i,\chi_i\}_{i=1}^m$ is an iid sample of the distribution $q_{\mathrm{train}}$ defined in Section~\ref{sec:sampling}.

The implementation also includes a short rollout residual, which enforces temporal consistency of the learned BSDE dynamics by comparing, over a short simulated path segment, the change in \(u_\theta\) with the accumulated drift and martingale terms prescribed by the BSDE. More precisely, starting from sampled pairs $(t_j,X_j)$, we roll the same Euler dynamics forward for $\ell$ steps and penalize the discrepancy between the increment of $u_\theta$ and the accumulated BSDE driver:
\[
\begin{aligned}
&\mathcal L_{\mathrm{roll}}(\theta)
=
\mathbb E\Bigg[
\Bigg|
u_\theta(t_{j+\ell},X_{j+\ell})-u_\theta(t_j,X_j)-\sum_{r=j}^{j+\ell-1}
\left(
\frac12\|\widehat z_\theta(t_r,X_r)\|^2\Delta t
+\widehat z_\theta(t_r,X_r)\cdot\Delta W_r
\right)
\Bigg|^2
\Bigg].
\end{aligned}
\]
The total optimized loss is
\[
\mathcal L
=\mathcal L_{\mathrm{local}}
+\lambda_{\mathrm{roll}}\mathcal L_{\mathrm{roll}},
\]
where the rollout weight $\lambda_{\mathrm{roll}}$ is fixed a priori and kept constant across experiments in which the rollout loss is active, while setting $\lambda_{\mathrm{roll}}=0$ removes this term.

The minimization is performed using AdamW \cite{loshchilov2019decoupled}, an adaptive stochastic gradient method that extends Adam by applying weight decay separately from the gradient-based parameter update, improving the behavior of regularization in adaptive optimization. Let $\theta_k$ denote the network parameters after the $k$th optimization step. During training, we maintain an exponential moving average (EMA) initialized by $\bar{\theta}_0=\theta_0$ and updated according to $\bar{\theta}_k=\rho\bar{\theta}_{k-1}+(1-\rho)\theta_k$, where $\rho\in(0,1)$ controls the degree of smoothing. After the final iteration $K$, the averaged parameters $\bar{\theta}_K$, rather than the raw final iterate $\theta_K$, are used for evaluation and sample generation, reducing sensitivity to stochastic fluctuations and yielding more stable learned reverse dynamics.

We therefore regard $\mathcal L(\theta)$ as a training criterion rather than as a certificate of sampling accuracy. In addition to the one-step residual, the method is evaluated with trajectory-level quantities. One such diagnostic is the pathwise BSDE residual
\[
\begin{aligned}
&\mathcal L_{\mathrm{path}}(\theta)
=
\mathbb E\Bigg[
\Bigg|
\log \pi(X_N)-u_\theta(0,X_0)-\sum_{i=0}^{N-1}
\left(
\frac12\|\widehat z_\theta(t_i,X_i)\|^2\Delta t
+\widehat z_\theta(t_i,X_i)\cdot\Delta W_i
\right)
\Bigg|^2
\Bigg].
\end{aligned}
\]

We discuss the mini-batch sampling scheme in the next section.

\subsection{Training sampling scheme}\label{sec:sampling}

The time-state pairs used in $\mathcal L$ mix two sources for sampling: broadly distributed states, obtained from a mixture of a Gaussian and a uniform proposal on the computational domain, and on-policy states, obtained by periodically rolling out the current model and storing intermediate states along the trajectory. This replay mechanism helps place training mass on regions that are actually visited by the learned sampler. Figure~\ref{fig:training_sampling_scheme_visual} presents a diagram of the sampling method.

More precisely, let $D \subset \bR^n$ denote the rectangular computational domain used for a given target. The broad proposal used during training is a mixture of a centered Gaussian and a Uniform distribution on $D$
\begin{align}
q_{\mathrm{broad}}(x)
= \rho\,N(x; \, 0, \, \sigma^2_{\mathrm{broad}}I_n) + (1-\rho)\,\mathrm{Unif}_D(x),
\end{align}
with $\rho \in [0,1]$ fixed in advance. A discrete time index $i \in \{0,\dots,N-1\}$ is then sampled uniformly and paired with $q_{\mathrm{broad}}$.

In parallel, the implementation builds an on-policy replay distribution by rolling out the current model. Starting from
\begin{align}
\widetilde X_0 \sim N(0,I_n),
\end{align}
it recursively generates
\begin{align}
\widetilde X_{i+1}
={}& \widetilde X_i
+ \left(\frac{1}{2}\bar\beta_{t_i}^2\widetilde X_i
+ \bar\beta_{t_i}\widehat z_\theta(t_i,\widetilde X_i)\right)\Delta t
+ \bar\beta_{t_i}\Delta \widetilde{W}_i,
\end{align}
with $\Delta \widetilde{W}_i \stackrel{iid}{\sim} N(0,\Delta t\,I_n)$, and stores the intermediate pairs $(t_i,\widetilde X_i)$. The actual training distribution is therefore a mixture of the broad proposal and the empirical replay measure:
\begin{align*}
q_{\mathrm{train}}
&= (1-\eta)\,q_{\mathrm{broad}}\otimes \mathrm{Unif}\{t_0,\dots,t_{N-1}\}
+ \eta\,q_{\mathrm{replay}},
\end{align*}
where $\eta \in [0,1]$ denotes the prescribed on-policy fraction.

\begin{figure}[t]
\centering
\resizebox{\textwidth}{!}{%
\begin{tikzpicture}[
  font=\sffamily\scriptsize,
  >=Latex,
  line cap=round,
  line join=round,
  box/.style={
    draw=#1!72,
    fill=#1!6,
    rounded corners=2.6pt,
    line width=0.55pt,
    align=center,
    inner xsep=5pt,
    inner ysep=4.5pt,
    minimum height=12mm,
    text width=33mm
  },
  box/.default=rbgray,
  mini/.style={
    draw=#1!72,
    fill=#1!6,
    rounded corners=2.4pt,
    line width=0.5pt,
    align=center,
    inner xsep=4pt,
    inner ysep=3.8pt,
    minimum height=9.5mm,
    text width=29mm
  },
  mini/.default=rbgray,
  panel/.style={
    draw=black!18,
    fill=white,
    rounded corners=6pt,
    line width=0.45pt,
    inner xsep=12pt,
    inner ysep=15pt
  },
  arrow/.style={->, line width=0.58pt, draw=black!70},
  softarrow/.style={->, line width=0.5pt, draw=black!50, dashed},
  lab/.style={font=\sffamily\tiny, text=black!70},
  alabel/.style={font=\sffamily\tiny, fill=white, inner sep=1.3pt, text=black!72},
  stage/.style={font=\sffamily\bfseries\scriptsize, text=black!78, fill=white, inner sep=1.5pt}
]

\node[stage, anchor=west] (title_time) at (-0.15,3.75) {1. Choose a time};

\node[mini=rbblue, anchor=west, text width=31mm] (time) at (0.40,2.95)
{sample $i\sim\mathrm{Unif}\{0,\ldots,N-1\}$\\[-0.2mm]
set $\tau=t_i$};

\coordinate (timepanelSW) at (-0.15,2.35);
\coordinate (timepanelNE) at (4.25,4.00);

\node[stage, anchor=west] (title_broad) at (-0.15,1.35) {2a. Broad-coverage state};

\node[box=rbgreen, anchor=west] (broadlaw) at (0,0.55)
{\textbf{Broad proposal}\\[-0.3mm]
$q_{\rm broad}=\rho N(0, \, \sigma^2_{\mathrm{broad}}I_n)+(1-\rho)\mathrm{Unif}_D$};

\begin{scope}[shift={(3.95,-0.18)}]
  \node[lab] at (1.45,1.60) {state sample in $D$};
  \draw[rounded corners=2.8pt, draw=rbgreen!65, fill=white, line width=0.55pt]
  (0,0) rectangle (2.9,1.38);
  \node[lab] at (0.23,1.17) {$D$};
  \draw[draw=rbgreen!45, fill=rbgreen!5, line width=0.4pt]
  (1.45,0.72) ellipse (0.70 and 0.36);
  \foreach \p in {(0.30,0.33),(0.58,0.95),(0.86,0.50),(1.10,1.07),(1.36,0.70),(1.60,0.83),(1.86,0.53),(2.13,0.90),(2.43,0.38),(2.62,1.02)}{
    \fill[rbgreen!78] \p circle (1.15pt);
  }
  \node[lab] at (1.45,-0.25) {$x^{(b)}\sim q_{\rm broad}$};
\end{scope}

\coordinate (broadpicSW) at (3.90,-0.72);
\coordinate (broadpicNE) at (6.95,1.62);

\node[mini=rborange, anchor=west] (broadpair) at (7.35,0.55)
{\textbf{broad pair}\\[-0.2mm]
$(\tau,\chi)=(t_i,x^{(b)})$};

\draw[arrow] (broadlaw.east) -- (3.95,0.55);
\draw[arrow] (6.85,0.55) -- (broadpair.west);
\draw[softarrow] (time.south east) .. controls +(0.55,-0.35) and +(0.75,0.50) .. (broadpair.north);

\node[stage, anchor=west] (title_replay) at (-0.15,-1.45) {2b. On-policy replay state};

\node[box=rbpurple, anchor=west] (rollout) at (0,-2.30)
{\textbf{Roll out current model}\\[-0.3mm]
$\widetilde X_0\sim N(0,I_n)$\\[-0.3mm]
Euler steps using $\widehat z_\theta$};

\begin{scope}[shift={(3.95,-3.20)}]
  \node[lab] at (1.62,1.75) {stored trajectory};
  \draw[rounded corners=2.8pt, draw=rbpurple!65, fill=white, line width=0.55pt]
  (0,0) rectangle (3.25,1.55);
  \draw[line width=0.78pt, rbpurple!78]
    (0.28,0.32) .. controls (0.65,1.08) and (1.00,0.23) .. (1.28,0.84)
                 .. controls (1.65,1.23) and (1.92,0.33) .. (2.25,0.78)
                 .. controls (2.60,1.10) and (2.85,0.70) .. (3.00,1.12);
  \foreach \p in {(0.28,0.32),(0.78,0.80),(1.28,0.84),(2.25,0.78),(3.00,1.12)}{
    \fill[rbpurple!82] \p circle (1.3pt);
  }
  \node[lab] at (0.28,0.10) {$\widetilde X_0$};
  \node[lab] at (0.78,1.06) {$\widetilde X_1$};
  \node[lab] at (1.28,0.58) {$\widetilde X_2$};
  \node[lab] at (2.25,0.52) {$\widetilde X_j$};
  \node[lab] at (3.00,1.36) {$\widetilde X_N$};
  \draw[line width=0.42pt, black!55] (0.20,0.13) -- (3.07,0.13);
  \foreach \x/\labt in {0.28/t_0,0.78/t_1,1.28/t_2,2.25/t_j,3.00/t_N}{
    \draw[line width=0.42pt, black!55] (\x,0.06) -- (\x,0.20);
    \node[below=0.6mm, font=\sffamily\tiny] at (\x,0.06) {$\labt$};
  }
\end{scope}

\coordinate (replaypicSW) at (3.90,-3.92);
\coordinate (replaypicNE) at (7.25,-1.25);

\node[mini=rborange, anchor=west] (replaypair) at (7.35,-2.30)
{\textbf{replay pair}\\[-0.2mm]
$(\tau,\chi)=(t_j,\widetilde X_j)$};

\draw[arrow] (rollout.east) -- (3.95,-2.30);
\draw[arrow] (7.20,-2.30) -- (replaypair.west);

\node[stage, anchor=west] (title_mix) at (10.90,1.35) {3. Mix sources and form mini-batch};

\node[draw=rbred!75, fill=rbred!7, diamond, aspect=1.65, line width=0.55pt,
      inner sep=1.5pt, align=center, text width=17mm] (mix) at (11.75,-0.82)
{mix\\[-0.5mm]sources};

\draw[arrow] (broadpair.east) -- node[alabel, above] {$1-\eta$} (mix.north west);
\draw[arrow] (replaypair.east) -- node[alabel, below] {$\eta$} (mix.south west);

\node[box=rbred, anchor=west, text width=38mm] (qtrain) at (13.35,-0.82)
{\textbf{Training law}\\[-0.3mm]
$q_{\rm train}=(1-\eta)q_{\rm broad}\otimes\mathrm{Unif}\{t_k\}+\eta q_{\rm replay}$};

\draw[arrow] (mix.east) -- (qtrain.west);

\node[box=rborange, anchor=west, text width=37mm] (batch) at (18.20,-0.82)
{\textbf{Mini-batch}\\[-0.2mm]
$\{(\tau_r,\chi_r)\}_{r=1}^m\sim q_{\rm train}$\\[-0.2mm]
$(t_{i_1},x^{(b)}_1),\ (t_{j_2},\widetilde X_{j_2}),\ldots$};

\draw[arrow] (qtrain.east) -- (batch.west);

\node[mini=rbgray, anchor=west, text width=34mm] (loss) at (18.20,-2.85)
{used to build the local and rollout BSDE residuals};

\draw[arrow] (batch.south) -- (loss.north);

\coordinate (refreshRight) at ($(loss.south)+(0,-0.65)$);
\coordinate (refreshLeft)  at (rollout.south |- refreshRight);

\draw[softarrow]
  (loss.south) -- (refreshRight)
  -- node[alabel, below, pos=0.50] {periodic replay refresh}
     (refreshLeft)
  -- (rollout.south);

\begin{pgfonlayer}{background}
  \node[panel, fit=(title_time)(time)(timepanelSW)(timepanelNE)] {};
  \node[panel, fit=(title_broad)(broadlaw)(broadpair)(broadpicSW)(broadpicNE)] {};
  \node[panel, fit=(title_replay)(rollout)(replaypair)(replaypicSW)(replaypicNE)] {};
  \node[panel, fit=(title_mix)(mix)(qtrain)(batch)(loss)] {};
\end{pgfonlayer}

\end{tikzpicture}%
}
\caption{Visual representation of the training sampling scheme. A time index is sampled and paired either with a broad proposal state \(x^{(b)}\sim q_{\rm broad}\), producing \((\tau,\chi)=(t_i,x^{(b)})\), or with an on-policy replay state \(\widetilde X_j\), producing \((\tau,\chi)=(t_j,\widetilde X_j)\). The two sources are mixed with weights \(1-\eta\) and \(\eta\), respectively, to form \(q_{\rm train}\). Mini-batches \(\{(\tau_r,\chi_r)\}_{r=1}^m\sim q_{\rm train}\) are then used to compute the local and rollout BSDE residuals.}
\label{fig:training_sampling_scheme_visual}
\end{figure}

\subsection{Direct and hybrid parametrizations of \texorpdfstring{$Z$}{Z}}
\label{sec:cons_Z}

In addition to the gradient parametrization
\begin{equation}
z_\theta^{\mathrm{grad}}(t,x)
=
\bar{\beta}(t)\nabla_x u_\theta(t,x),
\end{equation}
we consider a direct parametrization of the $Z$ process:
\begin{equation}
z_\theta^{\mathrm{dir}}(t,x)
=
\bar{\beta}(t)v_\theta(t,x),
\end{equation}
where \(v_\theta\) is represented by a separate neural network. This
parametrization relaxes the structural identity between the $Y$ and $Z$ processes and may therefore reduce the optimization burden associated
with differentiating \(u_\theta\). The discrepancy
\[
z_\theta^{\mathrm{dir}}(t,x)
-
\bar{\beta}(t)\nabla_x u_\theta(t,x)
\]
can be monitored as a diagnostic or explicitly penalized during training.

For more challenging targets, we also consider a hybrid parametrization that
incorporates the known terminal control. Since
\begin{equation}
Z_T^\star
=
\bar{\beta}(T)\nabla\log\pi(X_T^\star),
\end{equation}
we define
\begin{align}
z_\theta^{\mathrm{hyb}}(t,x)
={}&
\alpha(t)\bar{\beta}(t)\nabla\log\pi(x)
+
\bigl(1-\alpha(t)\bigr)
z_\theta^{\mathrm{dir}}(t,x),
\end{align}
where
\(\alpha(t)
=
\left(t/T\right)^p\), $p > 0$. Thus, the directly learned $Z$ process dominates at early times, whereas the
known terminal control is imposed as \(t\) approaches \(T\). For \(t<T\),
the term involving \(\nabla\log\pi\) should be interpreted as a
boundary-informed stabilization rather than as the exact time-dependent
score.

When the hybrid parametrization is used, we encourage consistency with the
FBSDE identity through the penalty
\begin{align}
\mathcal L_{\mathrm{cons}}(\theta)
:=
\frac{1}{m}
\sum_{i=1}^{m}
\Bigl\|
z_\theta^{\mathrm{hyb}}(\tau_i,\chi_i)
-
\bar{\beta}(\tau_i)
\nabla_x u_\theta(\tau_i,\chi_i)
\Bigr\|^2.
\label{eq:loss-consistency}
\end{align}
The corresponding training objective becomes
\begin{equation}
\mathcal L(\theta)
=
\mathcal L_{\mathrm{local}}(\theta)
+
\lambda_{\mathrm{roll}}
\mathcal L_{\mathrm{roll}}(\theta)
+
\lambda_{\mathrm{cons}}
\mathcal L_{\mathrm{cons}}(\theta).
\end{equation}

For the gradient parametrization, the consistency identity holds by
construction, so no additional penalty is required. For the unconstrained
direct parametrization, the discrepancy may be used only as a diagnostic,
which corresponds to setting \(\lambda_{\mathrm{cons}}=0\). For the hybrid
parametrization, we take \(\lambda_{\mathrm{cons}}>0\), chosen a priori, retaining the
flexibility of a directly learned $Z$ process while discouraging departures
from the FBSDE relation.

The full algorithm is presented below and illustrated in Figure~\ref{fig:diagram}.

\begin{algorithm}[H]
\caption{Reverse-BSDE Diffusion Sampler (RBDS)}
\label{alg:local-bsde}
\begin{algorithmic}[1]
\Require target log-density $\log \pi$, horizon $T$, diffusion scale $\beta$, grid $0=t_0<\cdots<t_N=T$, training steps $K$, batch size $m$
\State Initialize network parameters $\theta$
\For{$k=1,\ldots,K$}
    \State Sample a mini-batch of state-time pairs $(\tau_i,\chi_i)$ from the training distribution
    \State Evaluate the parametrized $u_\theta(\tau_i,\chi_i)$ and $z_\theta(\tau_i,\chi_i)$
    \State Compute the local residual loss, the rollout residual loss, and, if active, the consistency loss
    \State Update $\theta$ with AdamW
\EndFor
\State Sample $X_0 \sim N(0,I_n)$
\For{$i=0,\ldots,N-1$}
    \State Evaluate the trained $Z$ and propagate one Euler step of the reverse diffusion
\EndFor
\State \Return terminal samples $\widehat X_N$
\end{algorithmic}
\end{algorithm}

\begin{figure}
    \centering
    \resizebox{\textwidth}{!}{
    \begin{tikzpicture}[
  font=\sffamily\scriptsize,
  >=Latex,
  line cap=round,
  line join=round,
  block/.style={
    draw=#1!80,
    fill=#1!6,
    rounded corners=2.5pt,
    line width=0.55pt,
    align=center,
    inner xsep=5pt,
    inner ysep=4.7pt,
    minimum height=14.5mm,
    text width=32mm
  },
  block/.default=gray,
  slim/.style={
    draw=#1!80,
    fill=#1!6,
    rounded corners=2.5pt,
    line width=0.55pt,
    align=center,
    inner xsep=4.2pt,
    inner ysep=4pt,
    minimum height=10.5mm,
    text width=30mm
  },
  slim/.default=gray,
  tag/.style={
    circle,
    draw=#1!80,
    fill=white,
    line width=0.5pt,
    minimum size=4.6mm,
    inner sep=0pt,
    font=\sffamily\bfseries\tiny,
    text=#1!70!black
  },
  tag/.default=gray,
  band/.style={
    draw=black!23,
    fill=black!1,
    rounded corners=5pt,
    inner sep=7pt
  },
  arrow/.style={->, line width=0.55pt, draw=black!68},
  softarrow/.style={->, line width=0.5pt, draw=black!55, dashed},
  label/.style={font=\sffamily\tiny, fill=white, inner sep=1.2pt, text=black!70}
]

\definecolor{rbblue}{RGB}{52,104,176}
\definecolor{rbgreen}{RGB}{55,145,110}
\definecolor{rborange}{RGB}{206,128,44}
\definecolor{rbpurple}{RGB}{118,88,164}
\definecolor{rbred}{RGB}{185,75,75}
\definecolor{rbgray}{RGB}{80,80,80}

\node[block=rbblue] (setup) at (0,0) {\textbf{Problem setup}\\[-0.4mm]
$\log\pi$, $T$, $\beta$, grid\\
$0=t_0<\cdots<t_N=T$};
\node[tag=rbblue, anchor=north west] at ($(setup.north west)+(-1.4mm,1.4mm)$) {1};

\node[block=rbgreen] (pairs) at (4.35,0) {\textbf{Training pairs}\\[-0.4mm]
$(\tau,\chi)\sim q_{\rm train}$\\
broad proposal $+$ replay};
\node[tag=rbgreen, anchor=north west] at ($(pairs.north west)+(-1.4mm,1.4mm)$) {2};

\node[block=rborange] (network) at (8.70,0) {\textbf{Evaluate networks}\\[-0.4mm]
$u_\theta=\log\pi+(T-t)\NN_\theta$\\
$\widehat z_\theta=c\tanh(z_\theta/c)$};
\node[tag=rborange, anchor=north west] at ($(network.north west)+(-1.4mm,1.4mm)$) {3};

\node[block=rbpurple] (euler) at (13.05,0) {\textbf{Euler FBSDE step}\\[-0.4mm]
$\Delta W\sim N(0,\Delta t I)$\\
$X'\leftarrow\chi+b_\theta\Delta t+\bar\beta\Delta W$};
\node[tag=rbpurple, anchor=north west] at ($(euler.north west)+(-1.4mm,1.4mm)$) {4};

\node[block=rbred] (loss) at (17.40,0) {\textbf{Residual objective}\\[-0.4mm]
$\mathcal L_{\rm local}+\lambda_{\rm roll}\mathcal L_{\rm roll}$\\
$(+\lambda_{\rm cons}\mathcal L_{\rm cons})$};
\node[tag=rbred, anchor=north west] at ($(loss.north west)+(-1.4mm,1.4mm)$) {5};

\draw[arrow] (setup) -- (pairs);
\draw[arrow] (pairs) -- (network);
\draw[arrow] (network) -- (euler);
\draw[arrow] (euler) -- (loss);

\node[slim=rbgreen] (replay) at (4.35,-2.25) {\textbf{Replay refresh}\\[-0.4mm]
roll out current model;\\ store $(t_i,\widetilde X_i)$};
\node[slim=rbred] (adam) at (8.70,-2.25) {\textbf{AdamW update}\\[-0.4mm]
$\theta\leftarrow\theta-\alpha\widehat\nabla_\theta\mathcal L$};

\draw[arrow, rounded corners=5pt] (loss.south) -- ++(0,-0.55) -| (adam.east);
\draw[arrow] (adam.north) -- node[label, right] {updated $\theta$} (network.south);
\draw[softarrow] (adam.west) -- node[label, above] {periodically} (replay.east);
\draw[softarrow] (replay.north) -- node[label, left] {on-policy states} (pairs.south);

\begin{scope}[on background layer]
\node[band, fit=(setup)(pairs)(network)(euler)(loss)(adam)(replay)] (trainband) {};
\node[anchor=south west, font=\sffamily\small\bfseries, text=black!75, fill=white, inner sep=1pt] at ($(trainband.north west)+(1mm,1mm)$) {Training stage};
\end{scope}

\node[block=rbblue] (init) at (4.35,-5.60) {\textbf{Initialize}\\[-0.4mm]
$\widehat X_0\sim N(0,I_n)$};
\node[slim=rborange] (trained) at (8.70,-5.60) {\textbf{Trained $\widehat Z$}\\[-0.4mm]
 $\widehat z_\theta(t,x)$};
\node[block=rbpurple] (rollout) at (13.05,-5.60) {\textbf{Reverse SDE simulation}\\[-0.4mm]
for $i=0,\ldots,N-1$\\
Euler step with $\widehat z_\theta$};
\node[block=rbred] (output) at (17.40,-5.60) {\textbf{Return samples}\\[-0.4mm]
terminal particles\\
$\widehat X_N\approx p_0$};
\node[slim=rbgray] (diag) at (17.40,-7.55) {\textbf{Validation diagnostics}\\[-0.4mm]
path residual, covariance,\\ marginals, geometry};

\draw[arrow] (init) -- (trained);
\draw[arrow] (trained) -- (rollout);
\draw[arrow] (rollout) -- (output);
\draw[softarrow] (output) -- (diag);
\draw[softarrow] (adam.south) -- node[label, right] {after training} (trained.north);

\begin{scope}[on background layer]
\node[band, fit=(init)(trained)(rollout)(output)(diag)] (sampleband) {};
\node[anchor=south west, font=\sffamily\small\bfseries, text=black!75, fill=white, inner sep=1pt] at ($(sampleband.north west)+(1mm,1mm)$) {Sampling stage};
\end{scope}

\end{tikzpicture}}
    \caption{Schematic representation of the Reverse-BSDE Diffusion Sampler. The training stage samples time--state pairs from a mixture of broad proposal states and on-policy replay states, evaluates the parametrized functions \(u_\theta\) and  \(\widehat z_\theta\), propagates one Euler step of the coupled FBSDE, and updates the network parameters by minimizing local and rollout BSDE residuals. After training, the learned reverse dynamics are initialized from \(N(0,I_n)\) and simulated forward in reverse time to produce terminal particles \(\widehat X_N\), which are used as approximate samples from the target distribution \(p_0\).}

    \label{fig:diagram}
\end{figure}

\subsection{Finite-time initialization and approximation error}

Theorem~\ref{thm:bsde} is an exact statement for the time-reversed process initialized at the true finite-time marginal distribution of $\cX_T$, $p(T,\cdot)$. This distinction is important because the numerical sampler starts from a standard Gaussian. More precisely, $p(T,\cdot)$ only converges to $N(0,I_n)$ as $T$ grows, but they are not identical for a fixed finite horizon. 

With some abuse of notation, let $X_N$ denote the Euler--Maruyama discretization of the
same dynamics on the grid \(0=t_0<\cdots<t_N=T\), again using the exact \(Z^\star\) process. We define the Euler discretization error by the total variation distance:
\[
\varepsilon_{\mathrm{disc}}(T)
:=
\operatorname{TV}(
\mathcal L(X_N),
\mathcal L(X_T^\star)
).
\]
Thus \(\varepsilon_{\mathrm{disc}}\) measures only the error caused by replacing
the continuous-time reverse dynamics by the Euler scheme; it is separate from
both the finite-time initialization error and the learned $Z$ error.

We then quantify the difference between the exact finite-time initial law and the Gaussian initialization used by the implementable sampler. We provide an upper bound for the approximation error when the exact $Z$ process is replaced by a learned $Z$ process and the dynamics are discretized in time. Its proof is given in Appendix~\ref{app:uniqueness} and it is a consequence of our FBSDE reformulation and the results in \cite{chen2023sampling}. The required assumptions are outlined below and are standard in the literature. 

To state the corresponding discretization condition in the present time-inhomogeneous OU setting, define the effective OU horizon and effective mesh size by, respectively,
\begin{align*}
T_\beta
:=
\frac12\int_0^T \beta^2(s)\,ds, \quad
h_\beta
:=
\max_{0\le i<N}
\frac12\int_{t_i}^{t_{i+1}}\bar\beta^2(s)\,ds .
\end{align*}

\begin{assumption}[Finite second moment]
\label{ass:second-moment}
\(
\int_{\mathbb R^n}
\|x\|^2 p_0(x)\,\mathrm dx
<\infty.
\)
\end{assumption}

\begin{assumption}[Finite relative entropy]
\label{ass:finite-kl}
\(
\operatorname{KL}
\bigl(p_0\,\Vert\,N(0,I_n)\bigr)<\infty.
\)
\end{assumption}

\begin{assumption}[Time discretization]
\label{ass:discretization}
The grid is sufficiently fine in operational OU time:
\[
h_\beta\lesssim\frac{1}{L_s+1},
\]
where $\lesssim$ means less than or equal to, up to a constant multiplicative factor; and $L_s$ is the Lipschitz constant from Assumption \ref{ass:lipschitz-control}.
\end{assumption}

\begin{assumption}[$Z$ approximation error]
\label{ass:control-error}
The time-integrated mean-squared error between the approximate and the exact $Z$ process, evaluated along trajectories of the exact process \(X^\star\), is finite:
\[
\varepsilon_Z^2(T)
:=
\int_0^T
\mathbb E\!\left[
  \left\lVert
    \widehat z(t,X_t^\star)-Z_t^\star
  \right\rVert^2
\right]
\,\mathrm dt < +\infty.
\]
\end{assumption}

\begin{remark}
For a uniform grid with step size \(\Delta t=T/N\), a sufficient condition for \ref{ass:discretization} is
\[
\Delta t
\lesssim
\frac{1}{(L_s+1)\|\beta\|_\infty^2}.
\]
\end{remark}

Under the corresponding hypotheses of \cite{chen2023sampling}, the Euler
discretization term is the discretization error after the OU time change
\(s(t)=\frac12\int_0^t\beta^2(r)\,dr\). In particular, up to universal
constants and logarithmic factors, one may write
\[
\frac{\varepsilon_{\mathrm{disc}}(T)}{ \sqrt{T_\beta \, h_\beta}(L_s+1)}
\lesssim
\sqrt n
+
\sqrt{h_\beta \, \int_{\mathbb R^n}
\|x\|^2 p_0(x)\,\mathrm dx}
.
\]

\begin{theorem}[Total-variation error in terms of the \(Z\) process]
\label{thm:tv-z-chen}
Under Assumptions~\ref{ass:integrability}--\ref{ass:control-error}, it holds
\begin{align*}
\operatorname{TV}
(
\mathcal L(\widehat X_N),
p_0
)
&\lesssim
e^{-T_\beta}
\sqrt{\operatorname{KL}(p_0\|N(0,I_n))}
+
\varepsilon_{\mathrm{disc}}(T)
+
\varepsilon_Z(T).
\end{align*}
\end{theorem}

Theorem~\ref{thm:tv-z-chen} should be interpreted as an error-decomposition result, not as a convergence theorem for the neural-network training procedure. The optimized residual losses used below do not, by themselves, prove that $\varepsilon_Z$ is small.

\section{Numerical experiments}
\label{sec:numerics}

This section evaluates the proposed Reverse-BSDE Diffusion Sampler (RBDS) on two-dimensional synthetic targets. The purpose is not to claim state-of-the-art sampling performance, but to understand which parts of the FBSDE training procedure are useful for different target geometries and to compare the resulting sampler with a standard Markov-chain baseline.

\begin{table}
\centering
\renewcommand{\arraystretch}{1.4}

\begin{tabularx}{\textwidth}{@{} l L @{}}
\toprule
\textbf{Target} & \textbf{Description} \\
\midrule

Quartic
& Light-tailed product target with density proportional to $\exp[-(x_1^4+x_2^4)]$. \\

Moderately separated mixture
& Three-component mixture with modes at $(-3,-3)$, $(0,0)$, and $(3,3)$. \\

Spaced mixture
& Three-component mixture with widely separated modes at $(-5,-5)$, $(0,0)$, and $(5,5)$. \\

Correlated Gaussian
& Gaussian target with strong linear dependence and correlation $0.95$. \\

Banana
& Nonlinear target obtained from a volume-preserving transform of a Gaussian law. \\

Rings
& Radially multimodal target with preferred radii $2$, $4$, $6$, and $8$. \\
\bottomrule
\end{tabularx}

\caption{Synthetic target distributions used in the numerical experiments.}
\label{tab:targets-summary}
\end{table}

\subsection{Synthetic targets}

We evaluate the method on six synthetic targets, chosen to cover multimodal, light-tailed, correlated, and geometrically structured distributions. The targets are summarized in Table~\ref{tab:targets-summary}.

The unnormalized densities used in the experiments are defined as follows. The two Gaussian mixture targets have equal weights and common covariance \(0.45I_2\):
\[
\pi_{\mathrm{mix}}(x)
\propto
\sum_{j=1}^3
\exp\left(-\frac{\|x-\mu_j\|^2}{2(0.45)}\right).
\]

The light-tailed quartic target is
\[
\pi_{\mathrm{quartic}}(x)
\propto
\exp\left(-(x_1^4+x_2^4)\right).
\]

The correlated Gaussian target is \(N(0,\Sigma)\), with
\[
\Sigma
=
\begin{pmatrix}
1 & 0.95 \\
0.95 & 1
\end{pmatrix}.
\]

For the banana-shaped target, let \(a=2\), \(b=0.2\), \(\mu=(0,4)\), and
\[
\Sigma_b
=
\begin{pmatrix}
1 & 0.5 \\
0.5 & 1
\end{pmatrix}.
\]
We use the volume-preserving transformation
\[
g(x)
=
\left(
\frac{x_1}{a},
\;
a x_2 + ab(x_1^2+a^2)
\right),
\]
and define
\[
\pi_{\mathrm{banana}}(x)
\propto
\exp\left(
-\frac12
\bigl(g(x)-\mu\bigr)^\top
\Sigma_b^{-1}
\bigl(g(x)-\mu\bigr)
\right).
\]

Finally, for the rings target, let \(r=\|x\|\). The radial density is a mixture of positive-truncated one-dimensional Gaussians with means
\[
(2,4,6,8),
\]
weights
\[
(0.1,0.4,0.1,0.4),
\]
and common standard deviation \(0.2\). The corresponding planar density is
\[
\pi_{\mathrm{rings}}(x)
\propto
\frac{f_R(r)}{2\pi r},
\]
with \(r\) truncated below at \(10^{-6}\) in the implementation.

\subsection{Implementation details}

The neural network is a time-conditioned feed-forward model with four hidden layers of width $96$, SiLU activations, and sinusoidal embeddings of the normalized time variable. The inputs are $x$, $t/T$, $\sin(\pi t/T)$, $\cos(\pi t/T)$, $\sin(2\pi t/T)$, and $\cos(2\pi t/T)$; the rings experiment additionally includes the radius $\|x\|$. The light-tailed, high-correlation, and banana examples use the gradient-based parametrization of $Z$; the two Gaussian-mixture examples use a direct $Z$ parametrization; and the rings example uses the hybrid parametrization; in this case, we set $\lambda_{\mathrm{cons}}=0.25$. 

The optimization is performed with a short linear warmup followed by cosine learning-rate decay, and an exponential moving average of the network weights for evaluation. The learning rate starts at $2\times 10^{-3}$ with weight decay of $10^{-6}$; the first 100 steps use a linear warmup from $10\%$ of the base learning rate, and the cosine schedule decays to $10^{-5}$. Unless otherwise stated, we use gradient clipping at norm $1$, $N=112$ time steps, and an exponential moving average with decay $0.995$ starting after step 100. The default loss weights are $\lambda_{\mathrm{roll}}=0.75$, and $\lambda_{\mathrm{cons}}=0.25$ when the consistency term is active. We generate $6,000$ samples per target for the main diagnostics, but plot $3,000$ of them for readability.

Moreover, for the mini-batch sampling, we use $\rho=0.5$, $\sigma_{\mathrm{broad}}=2.2$, $\eta=0.2$, and 384 on-policy paths. The replay buffer is refreshed every 50 optimization steps. The rectangular domain $D$ is the plotting and terminal-anchor domain associated with each target: $[-6.5,6.5]^2$ for the moderately separated mixture, $[-8.5,8.5]^2$ for the spaced mixture, $[-2.2,2.2]^2$ for the quartic target, $[-4.5,4.5]^2$ for the high-correlation Gaussian, $[-4.5,4.5]\times[-3.5,3.5]$ for the banana target, and $[-9.5,9.5]^2$ for the rings target.

The $Z$ process approximation is clipped component-wise by $c\tanh(z/c)$. The clipping level is initialized at $c=4$ and then updated adaptively: at each step we compute the 0.98 quantile of the absolute raw $Z$ process values, multiply it by $1.35$, constrain the result to $[1.5,20]$, and update $c$ with exponential smoothing factor $0.97$.

For the light-tailed, high-correlation, and banana targets, the baseline choice is $T=2.6$ and $\beta=1$. The spaced mixture uses $\beta=2$. The rings example uses $T=3$, a hybrid $Z$ representation, the additional radius feature $\|x\|$, and the consistency penalty $\mathcal L_{\mathrm{cons}}$ from Section~\ref{sec:cons_Z}. During model selection below, we also consider target-specific variants such as a longer horizon, removing the rollout term, or disabling $Z$ clipping.

The two Gaussian-mixture examples test multimodality at two different separation scales. The light-tailed quartic, high-correlation Gaussian, and banana targets probe respectively sharp concentration, anisotropy, and nonlinear geometry. The rings target is the most structured example in the paper and is also the only one that uses the hybrid $Z$ with the consistency term and the radius feature.

\subsection{Baseline: MALA}
\label{sec:mala}

As a reference Markov-chain baseline we use MALA, the Metropolis-adjusted Langevin algorithm, which only requires pointwise evaluation of $\log \pi$ up to an additive constant and its score $\nabla \log \pi$. Given a current state $x$, MALA proposes
\begin{align}
    y = x + \frac{h^2}{2}\nabla \log \pi(x) + h \xi,
    \quad \xi \sim N(0,I_2),
\end{align}
and accepts the proposal with the usual Metropolis--Hastings correction
\begin{align}
    \alpha(x,y)
    =
    \min\left\{
    1,
    \frac{\pi(y)q_h(x\mid y)}{\pi(x)q_h(y\mid x)}
    \right\},
\end{align}
where $q_h(\cdot\mid x)$ is the Gaussian proposal density induced by the Langevin step. We run independent MALA chains in parallel, using target-specific step sizes chosen by short pilot runs. Each MALA run uses $2,500$ transitions with a burn-in of $500$ transitions and returns the final states of $6,000$ chains.\footnote{The step sizes used for the mixture, spaced mixture, light-tailed, high-correlation, banana, and rings targets are respectively
\(
0.38, 0.28, 0.45, 0.16, 0.22, 0.14.
\)
The corresponding post-burn-in acceptance rates are approximately
\(
0.977, 0.991, 0.833, 0.971, 0.975, 0.973.
\)}
This comparison is deliberately favorable to MALA in low dimension: MALA uses the exact target score, whereas the RBDS method learns time-dependent $Y$ and $Z$ processes before sampling. Its weakness in these examples is a lack of global mixing across well-separated components or annular modes.

\subsection{Evaluation protocol and diagnostics}
\label{sec:evaluation-diagnostics}

The optimized residual loss is used to monitor numerical training, but it is
not treated as a certificate of sampling accuracy. As discussed above, a
small BSDE residual does not by itself imply that the terminal law of the
learned dynamics is close to \(p_0\). We therefore assess the generated
particles using distributional diagnostics computed against high-accuracy
reference quantities whenever these are available.

For our target distribution $p_0$, we report the covariance error
\begin{equation}
E_{\mathrm{cov}}
=
\left\|
\widehat{\Sigma}-\Sigma_{p_0}
\right\|_{\mathrm F},
\label{eq:empirical-covariance-error}
\end{equation}
where \(\widehat{\Sigma}\) is the empirical covariance matrix of the generated
sample and
\(\Sigma_{p_0}=\operatorname{Cov}_{p_0}(X)\). This diagnostic is sensitive to
discrepancies in marginal scale, dependence, and orientation.

We also report the average coordinate-wise Wasserstein discrepancy
\begin{equation}
W_{1,\mathrm{marg}}
=
\frac{1}{n}
\sum_{r=1}^{n}
W_1\!\left(\widehat F_r,F_{p_0,r}\right),
\label{eq:empirical-marginal-wasserstein}
\end{equation}
where \(\widehat F_r\) and \(F_{p_0,r}\) are, respectively, the empirical and
reference marginal distribution functions of the \(r\)th coordinate. The
normalization by \(n\) makes the scale of this diagnostic comparable across
dimensions. It assesses agreement of the coordinate marginals, but does not
by itself characterize the joint distribution.

For targets whose geometry is naturally described by distance from a
reference point \(x_{\mathrm c}\in\mathbb R^n\), we additionally report the
radial moment diagnostic
\begin{equation}
G_{\mathrm{rad}}
=
\left|
\widehat{\mathbb E}[\|X-x_{\mathrm c}\|]-\mathbb E_{p_0}[\|X-x_{\mathrm c}\|]
\right|
+
\left|
\widehat{\operatorname{sd}}(\|X-x_{\mathrm c}\|)
-
\operatorname{sd}_{p_0}(\|X-x_{\mathrm c}\|)
\right|,
\label{eq:empirical-radial-diagnostic}
\end{equation}
where $\widehat{\mathbb E}$ and $\widehat{\operatorname{sd}}$ denote the empirical mean and standard deviation computed from the generated samples, respectively.
For the centered targets considered below, we choose \(x_{\mathrm c}=0\). This
supplementary quantity detects coarse discrepancies in radial location and
spread, but it does not compare the complete radial distributions and is not
used as a stand-alone measure of geometric agreement.

The diagnostics in
\eqref{eq:empirical-covariance-error}--\eqref{eq:empirical-radial-diagnostic}
capture complementary aspects of the generated law. Nevertheless, covariance
and coordinate-wise marginal agreement do not determine a general
non-Gaussian joint distribution. They are therefore interpreted together
with target-specific summaries and, in the two-dimensional experiments,
visual comparisons of the generated and reference distributions.

These empirical diagnostics should not be identified with the error terms in
Theorem~\ref{thm:tv-z-chen}. In particular, they do not separately estimate
the finite-time initialization error, the Euler discretization error
\(\varepsilon_{\mathrm{disc}}(T)\), or the learned-control error
\(\varepsilon_Z(T)\), and they do not provide a bound in total variation.
Their role is to summarize observable discrepancies between the generated
particles and the reference target.

The numerical examples use target-specific parametrizations and stabilization
choices. The Gaussian-mixture examples use the direct \(Z\) parametrization,
whereas the rings example uses the hybrid parametrization together with the
radius feature and the consistency penalty. Accordingly, the configurations
reported in Table~\ref{tab:selected_rb_methods} are representative
configurations obtained during the numerical development of each example,
rather than the output of a universal model-selection rule. We report the
underlying diagnostics directly and do not combine them into a
target-dependent scalar score.

The effects of the principal numerical choices---on-policy replay,
\(Z\)-process clipping, the rollout loss, the time-grid resolution, and the
time horizon---are examined separately in the ablation study.

\begin{table}[t]
\centering
\setlength{\tabcolsep}{7pt}
\renewcommand{\arraystretch}{1.08}
\resizebox{\textwidth}{!}{%
\safeinput{selected_rb_methods_table.tex}
}

\caption{Representative RBDS configurations and corresponding diagnostic
values for each target. The configurations are target-specific, reflecting
the distinct geometric features emphasized by the examples. Lower values
indicate closer agreement with the reference target.}
\label{tab:selected_rb_methods}
\end{table}

\subsection{Comparison with MALA}

Table~\ref{tab:selected_vs_mala} compares the selected RBDS variants with MALA. MALA is very strong on unimodal or mildly multimodal low-dimensional targets, especially the moderately separated mixture and the high-correlation Gaussian. The RBDS sampler is more competitive on targets where global transport structure or boundary geometry matters: the light-tailed quartic target, the spaced mixture, the banana target, and the rings target. For the spaced mixture and rings target, MALA can remain well centered while having poor radial geometry, covariance, and marginal Wasserstein errors, which indicates poor global exploration of separated components or radii.

\begin{table}[t]
\centering
\footnotesize
\setlength{\tabcolsep}{4pt}
\renewcommand{\arraystretch}{1.05}
\safeinput{selected_vs_mala_table.tex}
\caption{MALA diagnostic values for each target, together with the score-based comparison against the representative RBDS configurations reported in Table~\ref{tab:selected_rb_methods}. The final column provides a descriptive summary based on the target-specific diagnostic score and should not be interpreted as a formal benchmark ranking. Although MALA is a strong local-gradient baseline in low dimensions, it may mix poorly across widely separated modes or annular structures.}
\label{tab:selected_vs_mala}
\end{table}



\subsection{Ablation study}

Table~\ref{tab:ablation_scores} reports a compact ablation score for six 500-step RBDS variants: the default configuration, no on-policy replay, no $Z$ process clipping, no rollout loss, a coarser time grid, and a longer horizon. Scores are computed within each target, so they should be compared row-wise only. The results show that no single ablation dominates all targets. Removing the rollout term is useful for the spaced mixture and rings examples, but it degrades the banana and high-correlation examples. Removing clipping improves the high-correlation and banana short ablations, but it is unstable for the light-tailed target and produces non-finite samples. This supports using adaptive clipping as the safer default. 

\begin{table}[H]
\centering
\setlength{\tabcolsep}{5pt}
\renewcommand{\arraystretch}{1.08}
\resizebox{\textwidth}{!}{%
\safeinput{ablation_scores_table.tex}
}
\caption{Raw diagnostic errors for the 500-step RBDS ablations.
Lower values are better.
\textsuperscript{\(\dagger\)}The run produced non-finite samples,
so the diagnostics could not be computed.}
\label{tab:ablation_scores}
\end{table}


\subsection{Selected sample plots}

Figures~\ref{fig:LightTail2D}--\ref{fig:Rings2D} show representative samples
from the selected RBDS variants together with target contours,
one-dimensional marginals, and training-loss traces.

\begin{figure}
\centering

\safeincludegraphics[width=\linewidth]{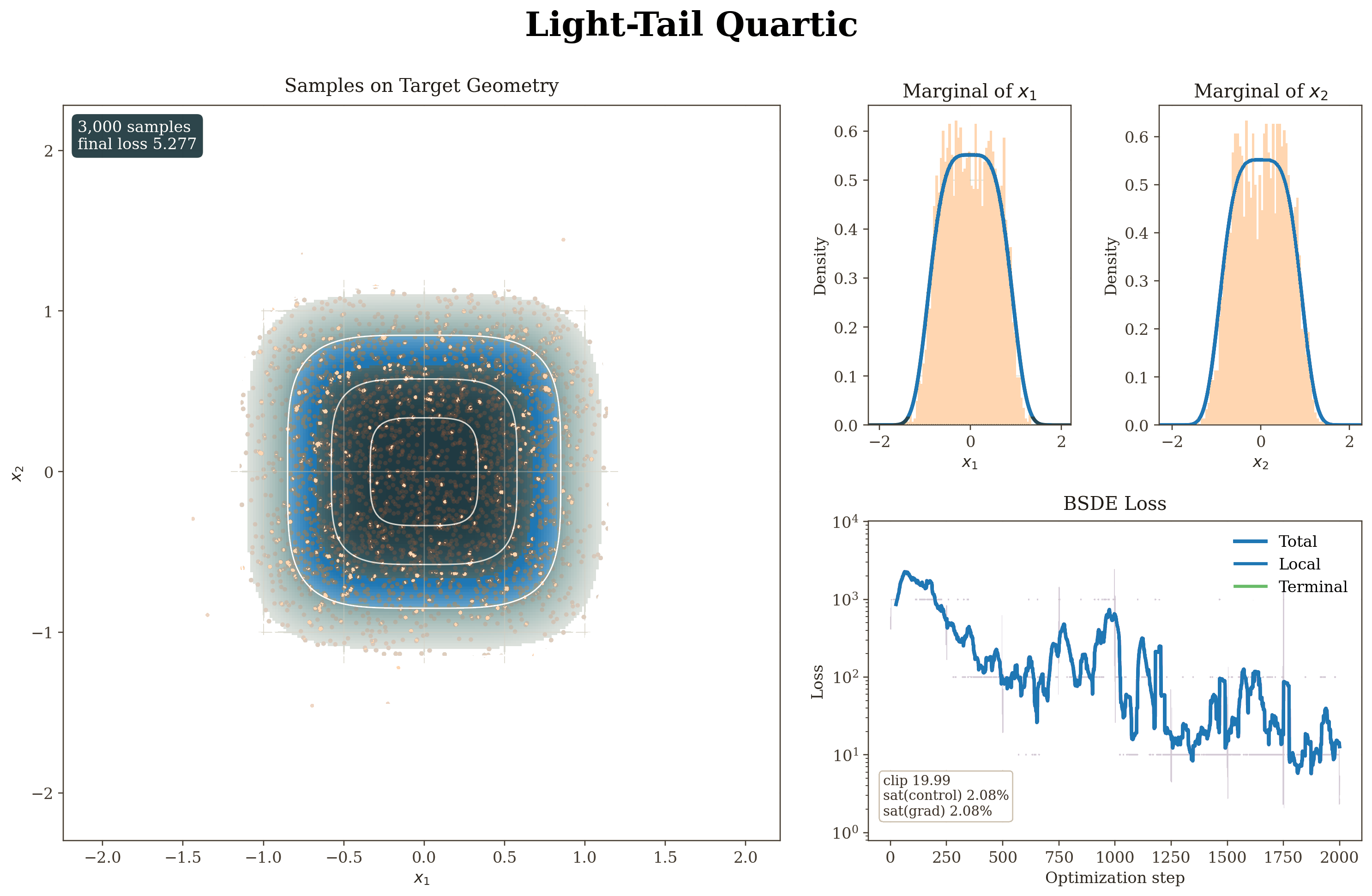}
    \caption{Quartic light-tailed target with non-Gaussian geometry.}
    \label{fig:LightTail2D}
\end{figure}

\begin{figure}
    \centering
    \safeincludegraphics[width=\linewidth]{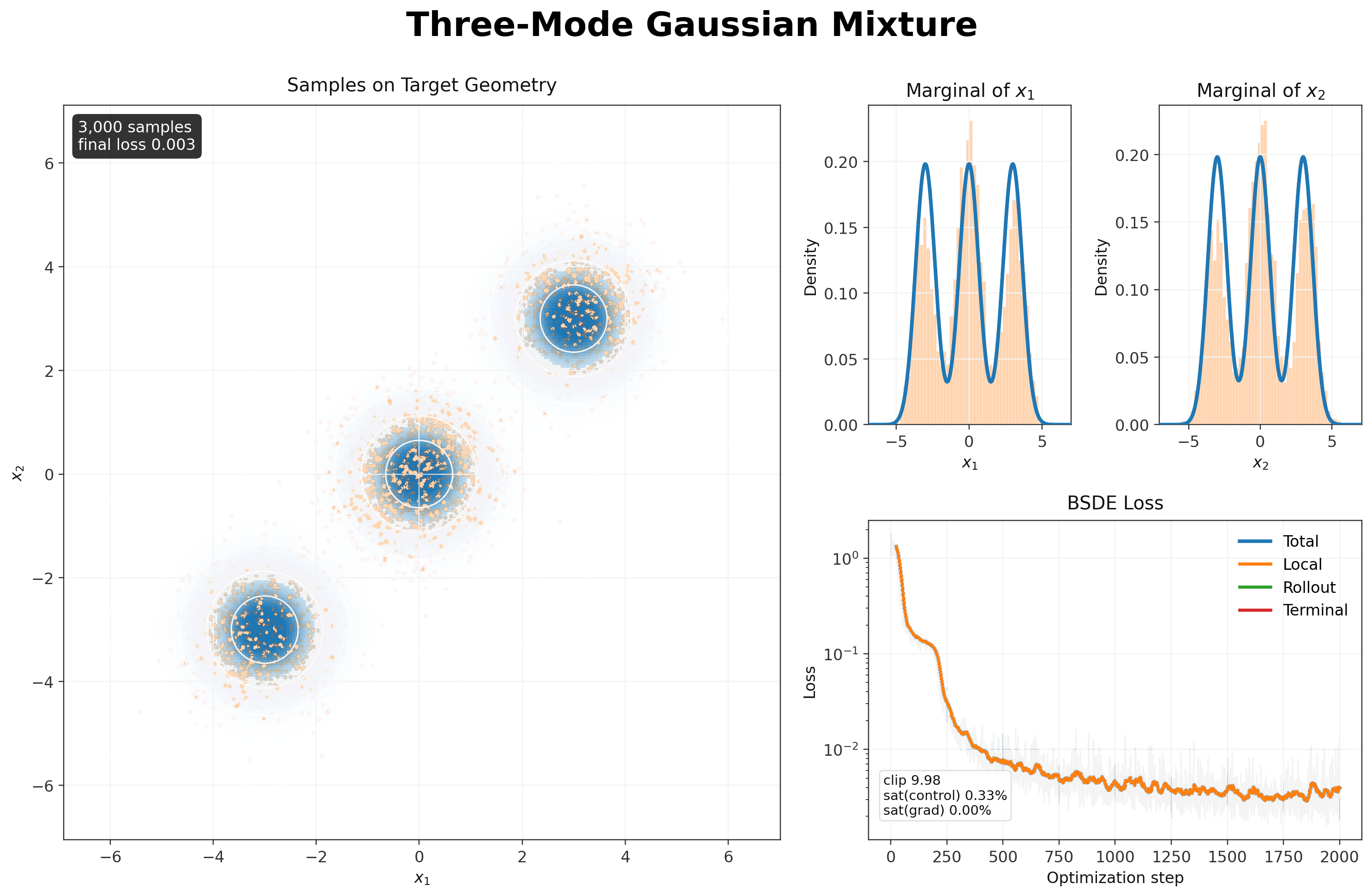}
    \caption{Three-component Gaussian mixture with moderately separated modes.}
    \label{fig:MixtureGaussian2D}
\end{figure}

\begin{figure}
    \centering

    \safeincludegraphics[width=\linewidth]{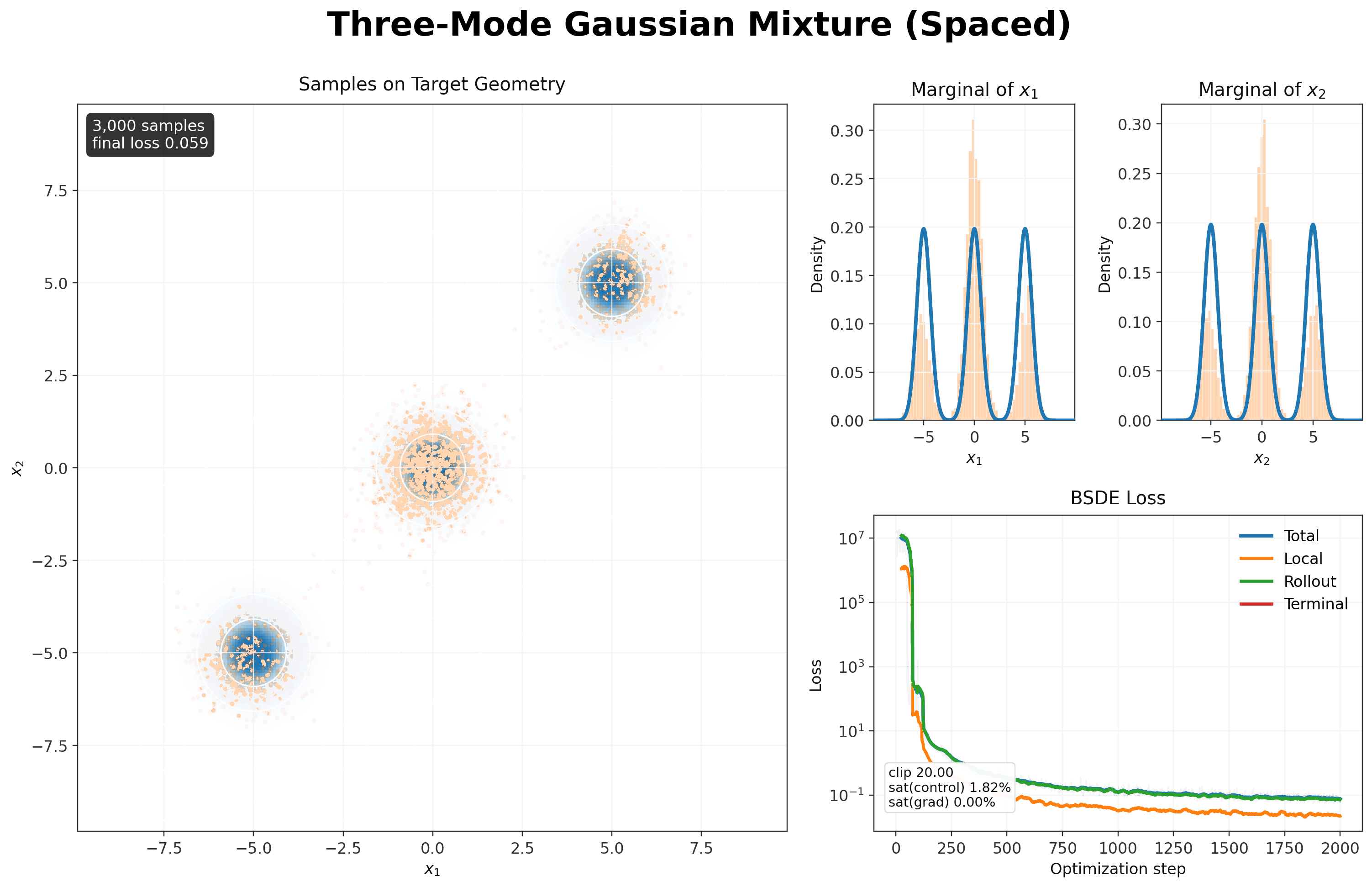}

    \caption{Three-component Gaussian mixture with well-separated modes.}
    \label{fig:MixtureGaussianSpaced2D}
\end{figure}

\begin{figure}
    \centering
    \safeincludegraphics[width=\linewidth]{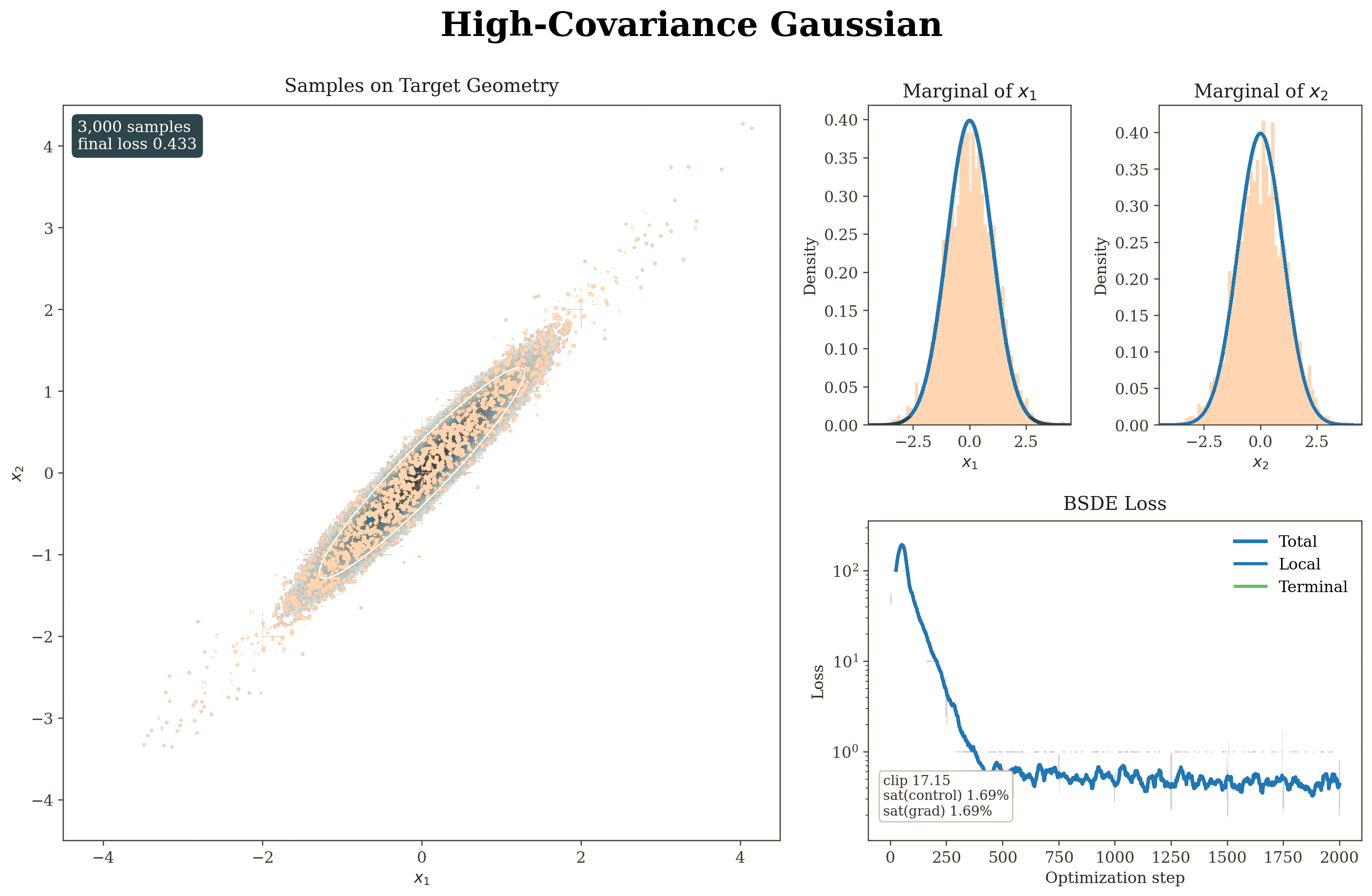}

    \caption{Gaussian target with strong correlation and large variance.}
    \label{fig:HighCovariance2D}
\end{figure}

\begin{figure}
    \centering
        \safeincludegraphics[width=\linewidth]{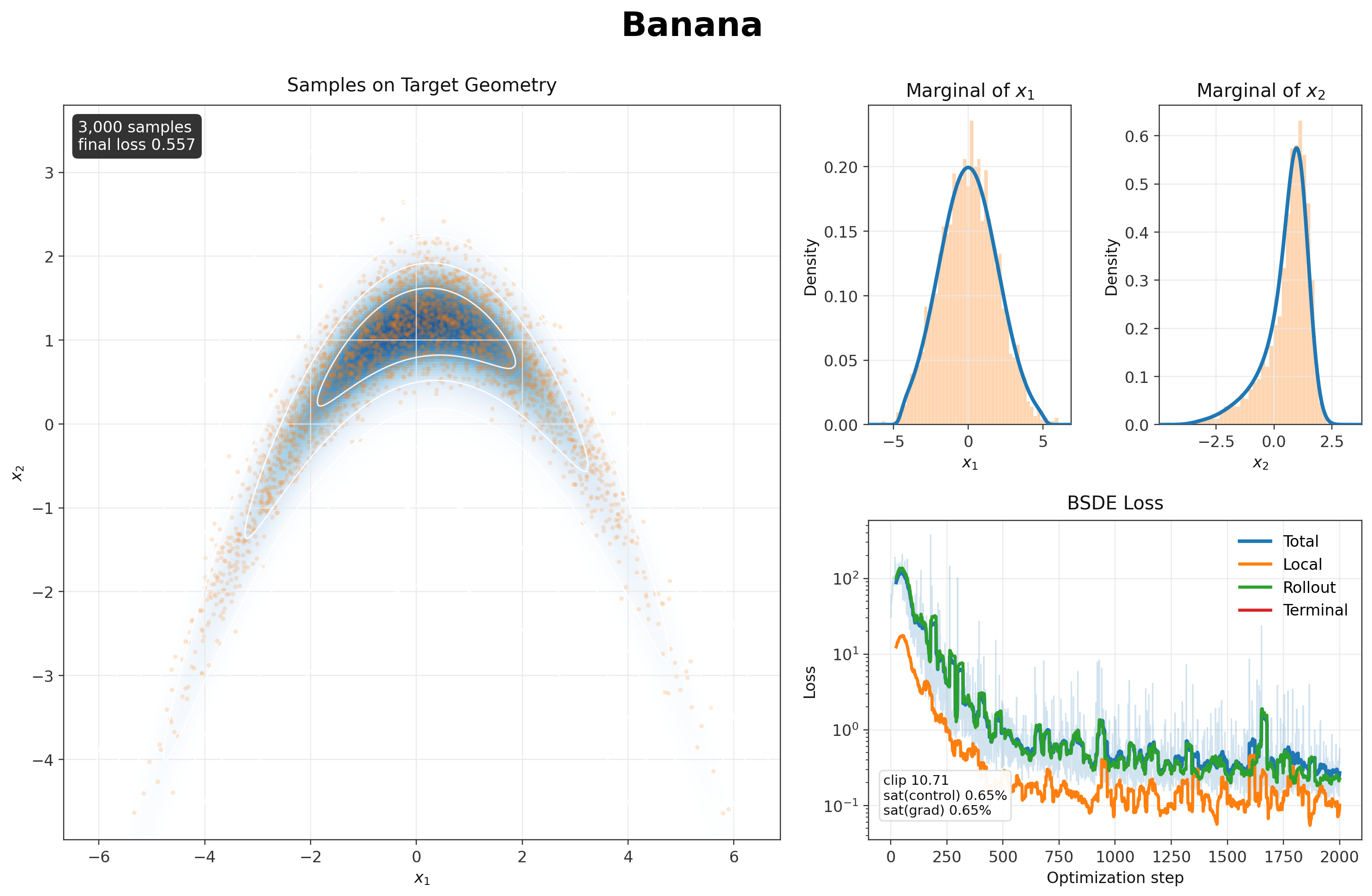}

    \caption{Banana-shaped target distribution with pronounced curvature.}
    \label{fig:Banana2D}
\end{figure}

\begin{figure}
    \centering
        \safeincludegraphics[width=\linewidth]{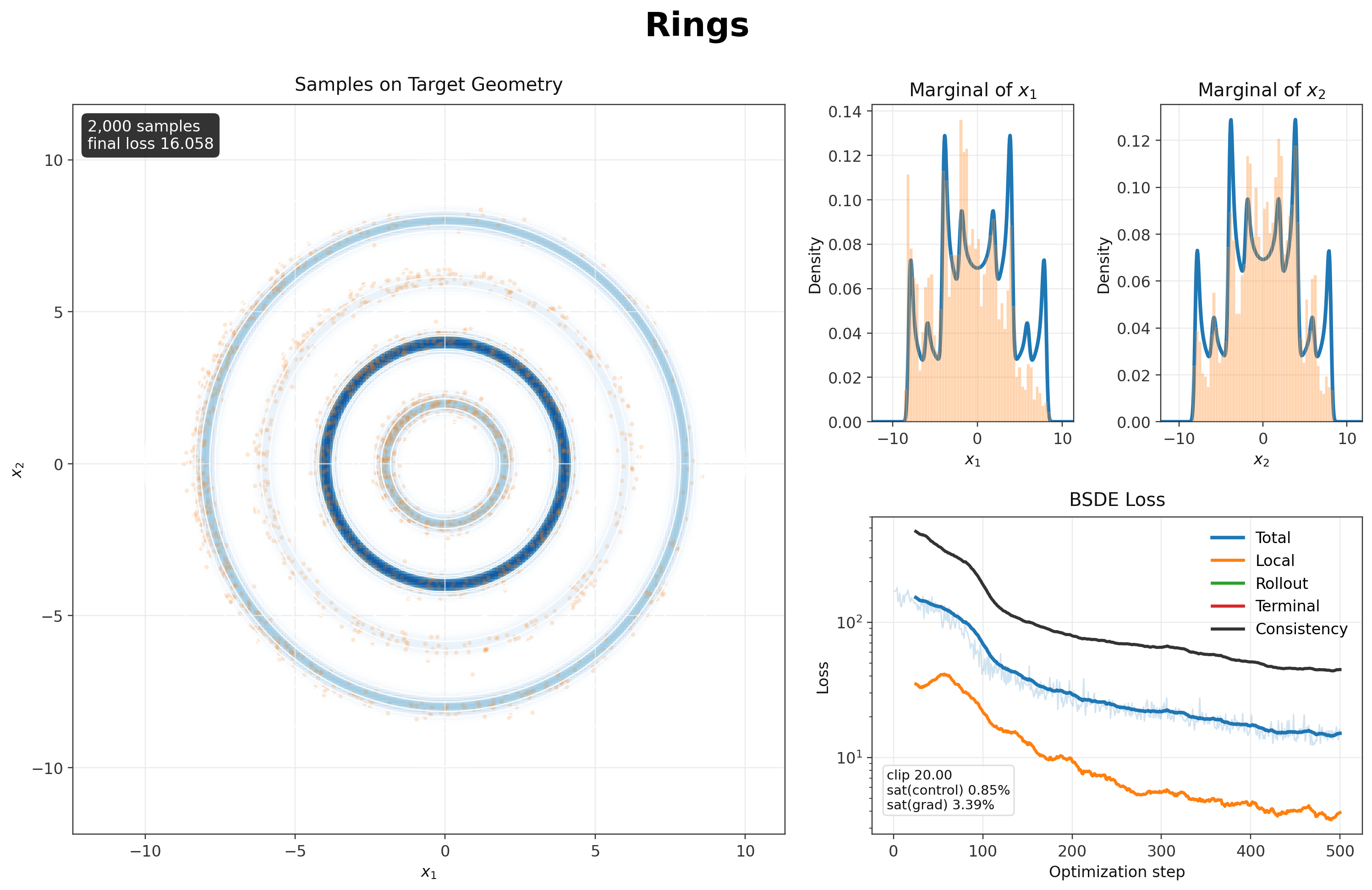}
    \caption{Concentric-rings target distribution. This is the only example in which we use the hybrid $Z$ parametrization together with the consistency regularization term.}
    \label{fig:Rings2D}
\end{figure}

\section{Conclusion}

We introduced a Reverse-BSDE Diffusion Sampler (RBDS) for sampling from targets specified through unnormalized densities. The method formulates diffusion-based sampling through a reverse-time BSDE representation and trains the sampler using local residual losses along simulated trajectories. This provides a principled way to use the FBSDE framework for sampling unnormalized densities.

The numerical experiments show that the method is a promising proof of concept, especially for targets with nontrivial global structure. Several directions remain open. These include improving the stability of the FBSDE training objective, developing sharper error estimates for the learned sampler, and extending the method to higher-dimensional targets and more challenging posterior sampling problems.

\newpage

\appendix

\section{BSDE primer}\label{app:bsde}

A Backward Stochastic Differential Equation (BSDE) is a stochastic differential equation where the final value is known (albeit random). Its study started during the 1970s in the stochastic optimal control literature. There are several textbooks on the subject, as for instance, \cite{BSDEbook}. 

Naively, one could think that integration of the usual SDE from $t$ to $T$ would give a first guess of what a BSDE would be:
\[X_t = X_T - \int_t^T \mu(s,X_s)ds - \int_t^T \sigma(s,X_s)dW_s.\]
The problem is that the right-hand side is $\cF_T$-measurable and the left-hand side must be $\cF_t$-measurable, where $(\cF_t)_{t \in [0,T]}$ is the Brownian filtration. Hence, we cannot choose $\sigma$ arbitrarily. In fact, it is necessary to allow the volatility process to be part of the solution. More precisely, we must consider
\begin{equation}\label{eq:bsde-app}
Y_t = \xi + \int_t^T f(s,Y_s, Z_s) ds - \int_t^T Z_s dW_s,
\end{equation}
where $\xi \in L^2(\cF_T)$ is the final condition for $Y$: $Y_T = \xi$. The function $f$ is called the generator of the BSDE. A solution to this BSDE is an $\cF$-adapted pair $(Y_t,Z_t)$ that satisfies the equation above. 

It is important to notice two things: being adapted to $(\cF_t)_{t \in [0,T]}$ is the most important requirement. If we dropped it, the BSDE would have trivial solutions like $Y_t = \xi$ and $Z_t = 0$ (in the case of $f(s,0,0) \equiv 0$). Moreover, if $\xi \in \cF_0$, this previous argument would hold and the trivial solution would be valid. Therefore, it is paramount to have $\xi$ measurable with respect to $\cF_T$ and the pair $(Y,Z)$ adapted to $(\cF_t)_{t \in [0,T]}$.

Notice as well that the BSDE~\eqref{eq:bsde-app} can also be written as an SDE (with unknown initial value):
\begin{align}\label{eq:bsde_sde}
Y_t = Y_0 - \int_0^t f(s,Y_s, Z_s)ds + \int_0^t Z_s dW_s,
\end{align}
or
\[dY_t = -f(t,Y_t, Z_t)dt + Z_tdW_t.\]
A \textit{Forward}-BSDE (FBSDE) adds an SDE to specify the dynamics of the final condition $\xi$ and the driver $f$:
\begin{align*}
&X_t = x + \int_0^t \mu(s,X_s)ds + \int_0^t \sigma(s,X_s)dW_s,\\
&Y_t = g(X_T) + \int_t^T f(s,X_s,Y_s, Z_s) ds - \int_t^T Z_s dW_s.
\end{align*}
Additionally, one could also consider that $(Y,Z)$ impacts the drift and volatility of the forward process. This would be called a fully-coupled FBSDE.

\section{Reverse SDE}\label{app:reverse}

The existence of the reverse SDE, first proved in \cite{ANDERSON1982}, was also studied in \cite{Haussmann1986} with simpler assumptions that are easier to verify. We state them here under the setting:
\begin{align*}
\begin{cases}
    d\cX_t = b(t, \cX_t)dt + \sigma(t) dW_t, \\
    \cX_0  \sim p_0,
\end{cases}
\end{align*}
where $b : [0,T] \times \mathbb{R}^n \longrightarrow \mathbb{R}^n$ and $\sigma : [0,T] \longrightarrow \mathbb{R}_{>0}$. In this case, the reverse SDE holds under the following assumptions on $b$ and $\sigma$:
\begin{enumerate}[label=(\alph*)]
\item There exists $K > 0$ such that for all $t\in [0, T]$ and $\bx, \by \in \R^n$
\begin{align*}
\begin{cases}
&\|b(t,\bx) - b(t,\by)\| \leq K \|\bx - \by\|,\\    
&\|b(t,\bx)\| + |\sigma(t)| \leq K(1 + \|\bx\|).
\end{cases}
\end{align*}
\item There exists $\alpha > 0$ such that $\sigma(t)^2 \geq \alpha$, for all $t \in \R$.
\item Assumption~\ref{ass:integrability} holds.
\end{enumerate}
The main theorem in \cite{Haussmann1986} states that, under these assumptions, the reverse process $\barcX$ satisfies

\begin{align*}
d\barcX_t
={}&\left(-b(T-t,\barcX_t)
+\sigma^2(T-t)\nabla\log p(T-t,\barcX_t)\right)dt+\sigma(T-t)d\overline W_t .
\end{align*}

The proof of the next lemma is a verification of the result described above: 
\begin{lemma}\label{lemma:reverse}
    Suppose Assumption~\ref{ass:integrability} holds, and let $\beta:[0,T] \to \R_{>0}$ be any continuous function. If $\mathcal{X}_t$ is defined as in Equation~\eqref{eq:forward}, then there exists a Brownian motion $\overline{W}_t$ such that Equation~\eqref{eq:reverse} holds for $\barcX_t = \cX_{T-t}$.
\end{lemma}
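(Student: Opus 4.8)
The plan is to obtain the statement as a direct specialization of the time-reversal theorem of Haussmann and Pardoux \cite{Haussmann1986} to the linear diffusion \eqref{eq:foward}. First I would cast \eqref{eq:foward} in the generic form $d\cX_t = b(t,\cX_t)\,dt + \sigma(t,\cX_t)\,dW_t$ by reading off the coefficients $b(t,\bx) = -\tfrac12\beta^2(t)\bx$ and $\sigma(t,\bx)=\beta(t)I_n$, and hence the diffusion matrix $a(t,\bx)=\sigma\sigma^\top = \beta^2(t)I_n$, which is spatially constant. Because this is a linear (conditionally Gaussian) SDE, its transition kernel is explicitly Gaussian: conditionally on $\cX_0$, the law of $\cX_t$ is normal with an explicit mean $m(t)\cX_0$ and covariance $v(t)I_n$ determined by $\beta$, so that $p(t,\cdot)$ is the convolution of the law of $m(t)\cX_0$ with a nondegenerate Gaussian. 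This already yields, for every $t\in(0,T]$, that $p(t,\cdot)$ is $C^\infty$ and strictly positive, which is exactly the regularity the reversal theorem needs on the marginal densities (and explains why (A.1) alone suffices here).

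Next I would verify the integrability hypotheses of \cite{Haussmann1986}. Since $a$ is constant in space, the local conditions reduce to local integrability on $(s,T)\times K$, for compact $K$ and $s>0$, of $\bx\mapsto |b(t,\bx)|\,p(t,\bx)$ and of $\bx\mapsto |\nabla p(t,\bx)|$; both are finite by the smoothness just established. The role of Assumption (A.1) is to control the behaviour at infinity: the polynomial weight $(1+|\bx|^2)^{-\lambda}$ is tailored to the linear growth of the drift $b$ and to the Gaussian tails of the transition kernel, and it is what guarantees the global finiteness condition required for the reverse drift to be well defined for a.e.\ $t$. I would carry out this tail estimate by writing $p(t,\bx)$ as the Gaussian smoothing of $p_0$ and bounding the resulting weighted integral using (A.1).

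With the hypotheses in force, the theorem of \cite{Haussmann1986} furnishes a Brownian motion $\overline{W}_t$ such that $\barcX_t=\cX_{T-t}$ solves $d\barcX_t = \hat b(t,\barcX_t)\,dt + \sigma(T-t,\barcX_t)\,d\overline{W}_t$, with reverse drift
\begin{equation*}
\hat b_i(t,\bx) = -b_i(T-t,\bx) + \frac{1}{p(T-t,\bx)}\sum_{j}\partial_{x_j}\!\big(a_{ij}(T-t,\bx)\,p(T-t,\bx)\big).
\end{equation*}
It then remains a direct computation to identify this with \eqref{eq:reverse}. Plugging in $a_{ij}=\beta^2\delta_{ij}$ collapses the divergence term to $\bar\beta^2(t)\,\partial_{x_i}p(T-t,\bx)/p(T-t,\bx)=\bar\beta^2(t)\,\partial_{x_i}\log p(T-t,\bx)$, while $-b(T-t,\bx)=\tfrac12\bar\beta^2(t)\bx$; summing the two recovers the drift $\tfrac12\bar\beta^2(t)\barcX_t+\bar\beta^2(t)\nabla\log p(T-t,\barcX_t)$, and $\sigma(T-t,\cdot)=\bar\beta(t)$ gives the diffusion coefficient, exactly as in \eqref{eq:reverse}.

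The step I expect to be the genuine obstacle is the second one: checking that the specific integrability afforded by (A.1) is sufficient to meet the hypotheses of \cite{Haussmann1986} in full. Everything local is immediate from the Gaussian smoothing, but the global/tail condition must be matched carefully to the exponent $\lambda$ and to the time-dependence of the variance $v(t)$, which degenerates as $t\to 0$; keeping track of this degeneracy while establishing the integrable-in-$t$ bound is where the real work lies. The remaining algebra relating $\hat b$ to \eqref{eq:reverse} is routine.
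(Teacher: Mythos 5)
Your overall strategy coincides with the paper's: both proofs obtain the lemma by specializing the time-reversal theorem of \cite{Haussmann1986} to $b(t,\bx)=-\tfrac12\beta^2(t)\bx$, $\sigma(t)=\beta(t)I_n$, and your identification of the reverse drift $-b(T-t,\bx)+p^{-1}\sum_j\partial_{x_j}(a_{ij}p)=\tfrac12\barbeta^2(t)\bx+\barbeta^2(t)\nabla\log p(T-t,\bx)$ is exactly the computation underlying \eqref{eq:reverse}. Where you diverge is in which form of the hypotheses you choose to verify. The paper invokes the version of the Haussmann--Pardoux result tailored to uniformly elliptic diffusions with Lipschitz, linearly growing coefficients, whose hypotheses are precisely: (a) Lipschitz and linear growth of $b$ and $\sigma$, (b) $\sigma^2$ bounded below, and (c) Assumption (A.1) itself. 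Since $\beta$ is continuous and strictly positive on the compact $[0,T]$, (a) and (b) are two-line estimates and (c) is assumed, so nothing further is needed. You instead aim at the primitive local-integrability hypotheses (integrability of $|b|\,p$ and $|\nabla p|$, smoothness and positivity of the marginals via the explicit Gaussian transition kernel) and correctly flag that the ``real work'' is showing that the weighted $L^2$ condition in (A.1) controls the tails uniformly as the conditional variance degenerates at $t\to0$. That reduction, however, is itself a theorem in \cite{Haussmann1986} (it is exactly how they pass from the nondegenerate, Lipschitz setting to a condition on $p_0$ alone), so re-deriving it is unnecessary; and as written your proposal does not actually carry it out, which leaves the one genuinely nontrivial step of your route unexecuted. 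The cleanest repair is simply to cite the packaged statement, as the paper does, after which your drift computation completes the argument.
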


\begin{proof}[Proof of Lemma \ref{lemma:reverse}.]

In our case, $b(t,\bx) = -\frac{1}{2} \beta(t)^2\bx$, and $\sigma(t) = \beta(t)$. Since $\beta$ is a continuous positive function on the compact interval $[0,T]$, there exist $A > 0$ and $\alpha>0$ such that $0 < \alpha \leq \beta(t) \leq A$. Hence, the condition $(a)$ holds since:
\begin{align*}
\|b(t,\bx)-b(t,\by)\|
=\frac{1}{2}\beta(t)^2\|\bx-\by\|
\leq \frac{A^2}{2}\|\bx-\by\| .
\end{align*}
\[ \|b(t,\bx)\| + |\sigma(t)| \leq \frac{1}{2}A^2 \|\bx\| + A, \]
and condition $(b)$ holds because $\sigma(t)^2 \geq \alpha^2$. Obviously, condition $(c)$ holds under our assumptions. This concludes the proof.
    
\end{proof}

\section{Proofs}\label{app:uniqueness}

\begin{proof}[Proof of Theorem~\ref{thm:bsde}.]
By Lemma~\ref{lemma:reverse}, the reverse SDE \eqref{eq:reverse} holds for the reversed process $\barcX$ since Assumption~\ref{ass:integrability} holds. Moreover, $u$ given by \eqref{eq:u} is smooth and well defined; hence, by Itô's formula, we find that $(\barcX_t,u(t,\barcX_t),  \barbeta(t)\nabla u(t,\barcX_t))$ is a solution of \eqref{eq:bsde}.

Uniqueness is established by a Girsanov argument. Suppose that $(X_t, Y_t, Z_t)$ is a solution of~\eqref{eq:bsde} in the class stated in the theorem. Define $Y_t^1 = u(t, X_t)$ and $Z_t^1 = \barbeta(t) \nabla u(t,X_t)$. First, we prove that $Y_t = Y_t^1$ and $Z_t = Z_t^1$. By Itô's formula and Equation~\eqref{eq:uPDE}, applied along the candidate forward process $X$, we obtain
\begin{align}\label{eq:bsde-nofoward-right}
\begin{cases}
-dY^1_t
= \left(\dfrac{1}{2}\|Z^1_t\|^2-Z^1_t\cdot Z_t\right)dt -Z^1_t d\overline W_t,\\
Y^1_T=\log\pi(X_T).
\end{cases}
\end{align}
On the other hand, the candidate solution satisfies
\[
-dY_t=-\frac12\|Z_t\|^2dt-Z_t\,d\overline W_t,
\qquad Y_T=\log\pi(X_T).
\]
Define $\widehat{Y}_t = Y_t^1 - Y_t$, $\widehat{Z}_t = Z_t^1 - Z_t$ and $\eta_t = \frac{1}{2}(Z_t^1 - Z_t)$. Subtracting the two equations gives
\begin{align}\label{eq:difference-BSDE}
\widehat Y_t
= \int_t^T \eta_s\cdot\widehat Z_s\,ds
- \int_t^T \widehat Z_s\,d\overline W_s .
\end{align}
Hence,
\[ \widehat{Y}_t = - \int_t^T \widehat{Z}_s d B_s, \]
where
\[B_t = \overline{W}_t - \int_0^t \eta_s ds.\]

By the first admissibility condition in \eqref{eq:admissibility}, $\eta$ satisfies Novikov's condition. Girsanov's theorem therefore gives an equivalent measure $\mathbb{Q}$ under which $B$ is a Brownian motion. Under $\mathbb Q$, Equation~\eqref{eq:difference-BSDE} becomes
\[
\widehat Y_t=-\int_t^T \widehat Z_s\,dB_s.
\]
Now, by the second admissibility condition in \eqref{eq:admissibility}, the stochastic integral is a true martingale under $\mathbb{Q}$. Since $\widehat Y_T=0$, we conclude $\widehat Y_t=\mathbb E^\mathbb Q[\widehat Y_T\mid\mathcal F_t]=0$. Its quadratic variation then implies $\widehat Z=0$ $dt\otimes d\mathbb Q$-a.e. Since $\mathbb{Q}$ is equivalent to the original measure, $Y_t = Y_t^1$ and $Z_t = Z_t^1$ a.s. in the original measure. 

Therefore, $Y_t = u(t, X_t)$ and $Z_t = \barbeta(t)\nabla u(t, X_t)$. Hence, using Equation~\eqref{eq:u} and the dynamics of $X_t$, we find that
\begin{align*}
\begin{cases}
dX_t
= \Bigl(\tfrac{1}{2}\barbeta^2(t)X_t
+\barbeta^2(t)\nabla\log p(T-t,X_t)\Bigr)dt +\barbeta(t)d\overline W_t,\\
X_0=\barcX_0 .
\end{cases}
\end{align*}
	and, finally, by uniqueness of strong solutions for the reverse SDE~\eqref{eq:reverse}, which holds because of Assumption~\ref{ass:lipschitz-control}, $X_t = \barcX_t$ for all $t\in[0,T]$ almost surely.

\end{proof}

\begin{proof}[Proof of Theorem~\ref{thm:tv-z-chen}.]

Applying the bound of \cite{chen2023sampling} after the time change
$s(t)=\frac12\int_0^t\beta^2(r)\,dr$ gives the stated initialization
and discretization terms. The only remaining point is to translate the
score-estimation term into our FBSDE framework. In the present reverse-time notation,
\[
\nabla\log p(T-t,x)
=
\bar\beta(t)^{-1}z^\star(t,x),
\quad
\widehat s(t,x)
=
\bar\beta(t)^{-1}\widehat z(t,x).
\]
Therefore,
\begin{align*}
&\bar\beta^2(t)
\left\|
\widehat s(t,x)-\nabla\log p(T-t,x)
\right\|^2=
\left\|
\widehat z(t,x)-z^\star(t,x)
\right\|^2.    
\end{align*}
Thus the score-learning term in \cite{chen2023sampling} is exactly the
$Z$ process error $\varepsilon_Z$, while the initialization and discretization
terms are unchanged. This proves the claim.
\end{proof}

\bibliographystyle{plainnat}
\bibliography{biblio}

@article{ChassagneuxRichou2016,
  author  = {Chassagneux, Jean-Fran{\c{c}}ois and Richou, Adrien},
  title   = {Numerical simulation of quadratic {BSDEs}},
  journal = {The Annals of Applied Probability},
  year    = {2016},
  volume  = {26},
  number  = {1},
  pages   = {262--304},
  doi     = {10.1214/14-AAP1090}
}

@article{ZhengEtAl2026,
  author  = {Zheng, Xiaotao and Yue, Xingye and Xia, Zhihong and Li, Xin},
  title   = {{XNet}-Enhanced Deep {BSDE} Method and Numerical Analysis},
  journal = {Journal of Scientific Computing},
  year    = {2026},
  volume  = {107},
  number  = {2},
  pages   = {43},
  doi     = {10.1007/s10915-026-03237-4}
}

@article{BenderZhang2008,
  author  = {Bender, Christian and Zhang, Jianfeng},
  title   = {Time discretization and Markovian iteration for coupled {FBSDEs}},
  journal = {The Annals of Applied Probability},
  volume  = {18},
  number  = {1},
  pages   = {143--177},
  year    = {2008},
  doi     = {10.1214/07-AAP448}
}

@incollection{ImkellerDosReisZhang2010,
  author    = {Imkeller, Peter and dos Reis, Gon{\c{c}}alo and Zhang, Jianing},
  title     = {Results on Numerics for {FBSDE} with Drivers of Quadratic Growth},
  booktitle = {Contemporary Quantitative Finance: Essays in Honour of Eckhard Platen},
  editor    = {Chiarella, Carl and Novikov, Alexander},
  pages     = {159--182},
  publisher = {Springer},
  address   = {Berlin, Heidelberg},
  year      = {2010},
  doi       = {10.1007/978-3-642-03479-4_9}
}

@article{Richou2011,
  author  = {Richou, Adrien},
  title   = {Numerical simulation of {BSDEs} with drivers of quadratic growth},
  journal = {The Annals of Applied Probability},
  volume  = {21},
  number  = {5},
  pages   = {1933--1964},
  year    = {2011},
  doi     = {10.1214/10-AAP744}
}

@article{vargas2023bayesian,
  author  = {Vargas, Francisco and Ovsianas, Andrius and Fernandes, David and Girolami, Mark and Lawrence, Neil D. and N{\"u}sken, Nikolas},
  title   = {Bayesian learning via neural Schr{\"o}dinger--F{\"o}llmer flows},
  journal = {Statistics and Computing},
  volume  = {33},
  number  = {3},
  pages   = {3},
  year    = {2023},
  doi     = {10.1007/s11222-022-10172-5}
}

@inproceedings{chen2022likelihood,
  title     = {Likelihood Training of Schr{\"o}dinger Bridge using Forward-Backward SDEs Theory},
  author    = {Chen, Tianrong and Liu, Guan-Horng and Theodorou, Evangelos A.},
  booktitle = {International Conference on Learning Representations},
  year      = {2022},
  url       = {https://openreview.net/forum?id=nioAdKCEdXB}
}

@inproceedings{chen2023sampling,
  title     = {Sampling is as easy as learning the score: theory for diffusion models with minimal data assumptions},
  author    = {Chen, Sitan and Chewi, Sinho and Li, Jerry and Li, Yuanzhi and Salim, Adil and Zhang, Anru R.},
  booktitle = {International Conference on Learning Representations},
  year      = {2023}
}

@article{hure2020deep,
  title        = {Deep Backward Schemes for High-Dimensional Nonlinear PDEs},
  author       = {Huré, Côme and Pham, Huyên and Warin, Xavier},
  journal      = {Mathematics of Computation},
  volume       = {89},
  number       = {324},
  pages        = {1547--1579},
  year         = {2020},
  doi          = {10.1090/mcom/3514}
}

@article{germain2022approximation,
  title        = {Approximation Error Analysis of Some Deep Backward Schemes for Nonlinear PDEs},
  author       = {Germain, Maximilien and Pham, Huyên and Warin, Xavier},
  journal      = {SIAM Journal on Scientific Computing},
  volume       = {44},
  number       = {1},
  pages        = {A28--A56},
  year         = {2022},
  doi          = {10.1137/20M1355355}
}

@inproceedings{loshchilov2019decoupled,
  title     = {Decoupled Weight Decay Regularization},
  author    = {Loshchilov, Ilya and Hutter, Frank},
  booktitle = {International Conference on Learning Representations},
  year      = {2019},
  url       = {https://openreview.net/forum?id=Bkg6RiCqY7}
}

@article{han2020convergence,
  title={{Convergence of the deep BSDE method for coupled FBSDEs}},
  author={Han, Jiequn and Long, Jihao},
  journal={Probability, Uncertainty and Quantitative Risk},
  volume={5},
  number={1},
  pages={5},
  year={2020},
  publisher={Springer}
}

@article{ANDERSON1982,
title = {Reverse-time diffusion equation models},
journal = {Stochastic Processes and their Applications},
volume = {12},
number = {3},
pages = {313-326},
year = {1982},
issn = {0304-4149},
doi = {https://doi.org/10.1016/0304-4149(82)90051-5},
url = {https://www.sciencedirect.com/science/article/pii/0304414982900515},
author = {Brian D.O. Anderson},
abstract = {Reverse-time stochastic diffusion equation models are defined and it is shown how most processes defined via a forward-time or conventional diffusion equation model have an associated reverse-time model.}
}

@article{Haussmann1986,
author = {U. G. Haussmann and E. Pardoux},
title = {{Time Reversal of Diffusions}},
volume = {14},
journal = {The Annals of Probability},
number = {4},
publisher = {Institute of Mathematical Statistics},
pages = {1188 -- 1205},
keywords = {diffusion process, Kolmogorov equation, Markov process, Martingale problem, Time reversal},
year = {1986},
doi = {10.1214/aop/1176992362},
URL = {https://doi.org/10.1214/aop/1176992362}
}

@misc{song2021,
      title={{Score-Based Generative Modeling through Stochastic Differential Equations}}, 
      author={Yang Song and Jascha Sohl-Dickstein and Diederik P. Kingma and Abhishek Kumar and Stefano Ermon and Ben Poole},
      year={2021},
      eprint={2011.13456},
      archivePrefix={arXiv},
      primaryClass={cs.LG}
}

@article{surveyBSDE,
   title={Numerical methods for backward stochastic differential equations: A survey},
   volume={20},
   ISSN={1549-5787},
   url={http://dx.doi.org/10.1214/23-PS18},
   DOI={10.1214/23-ps18},
   journal={Probability Surveys},
   publisher={Institute of Mathematical Statistics},
   author={Chessari, Jared and Kawai, Reiichiro and Shinozaki, Yuji and Yamada, Toshihiro},
   year={2023},
   month=jan }

@article{DeepBSDE,
  author={E, Weinan and Han, Jiequn and Jentzen, Arnulf},
  title={Deep learning-based numerical methods for high-dimensional parabolic partial differential equations and backward stochastic differential equations},
  journal={Communications in Mathematics and Statistics},
  volume={5},
  number={4},
  pages={349--380},
  year={2017},
  publisher={Springer},
  url={https://doi.org/10.1007/s40304-017-0117-6}
}

@misc{ReverDiffusionMonteCarlo,
      title={{Reverse Diffusion Monte Carlo}}, 
      author={Xunpeng Huang and Hanze Dong and Yifan Hao and Yi-An Ma and Tong Zhang},
      year={2024},
      eprint={2307.02037},
      archivePrefix={arXiv},
      primaryClass={stat.ML},
      url={https://arxiv.org/abs/2307.02037}, 
}

@misc{DiffusionPINN,
      title={{Diffusion-PINN Sampler}}, 
      author={Zhekun Shi and Longlin Yu and Tianyu Xie and Cheng Zhang},
      year={2024},
      eprint={2410.15336},
      archivePrefix={arXiv},
      primaryClass={stat.ML},
      url={https://arxiv.org/abs/2410.15336}, 
}

@misc{PINN,
      title={{Physics Informed Deep Learning (Part I): Data-driven Solutions of Nonlinear Partial Differential Equations}}, 
      author={Maziar Raissi and Paris Perdikaris and George Em Karniadakis},
      year={2017},
      eprint={1711.10561},
      archivePrefix={arXiv},
      primaryClass={cs.AI},
      url={https://arxiv.org/abs/1711.10561}, 
}

@misc{brown2020languagemodelsfewshotlearners,
      title={{Language Models are Few-Shot Learners}}, 
      author={Tom B. Brown and Benjamin Mann and Nick Ryder and Melanie Subbiah and Jared Kaplan and Prafulla Dhariwal and Arvind Neelakantan and Pranav Shyam and Girish Sastry and Amanda Askell and Sandhini Agarwal and Ariel Herbert-Voss and Gretchen Krueger and Tom Henighan and Rewon Child and Aditya Ramesh and Daniel M. Ziegler and Jeffrey Wu and Clemens Winter and Christopher Hesse and Mark Chen and Eric Sigler and Mateusz Litwin and Scott Gray and Benjamin Chess and Jack Clark and Christopher Berner and Sam McCandlish and Alec Radford and Ilya Sutskever and Dario Amodei},
      year={2020},
      eprint={2005.14165},
      archivePrefix={arXiv},
      primaryClass={cs.CL},
      url={https://arxiv.org/abs/2005.14165}, 
}

@book{BSDEbook,
   title =     {Backward stochastic differential equations: from linear to fully nonlinear theory},
   author =    {Jianfeng Zhang},
   publisher = {Springer},
   address   = {New York},
   year =      {2017},
   series =    {Probability Theory and Stochastic Modelling 86},
   edition =   {1st},
}

@article{Uniqueness1,
  author    = {Shengjun Fan and Ying Hu and Shanjian Tang},
  title     = {On the uniqueness of solutions to quadratic {BSDEs} with non-convex generators and unbounded terminal conditions},
  journal   = {Comptes Rendus. Mathématique},
  volume    = {358},
  number    = {2},
  pages     = {227--235},
  year      = {2020},
  doi       = {10.5802/crmath.40},
  url       = {https://comptes-rendus.academie-sciences.fr/mathematique/articles/10.5802/crmath.40/}
}

@article{Uniqueness2,
author = {Freddy Delbaen and Ying Hu and Adrien Richou},
title = {{On the uniqueness of solutions to quadratic BSDEs with convex generators and unbounded terminal conditions}},
volume = {47},
journal = {Annales de l'Institut Henri Poincaré, Probabilités et Statistiques},
number = {2},
publisher = {Institut Henri Poincaré},
pages = {559 -- 574},
keywords = {Backward stochastic differential equations, Generator of quadratic growth, Nonlinear Feynman–Kac formula, Unbounded terminal condition, Uniqueness result},
year = {2011},
doi = {10.1214/10-AIHP372},
URL = {https://doi.org/10.1214/10-AIHP372}
}

@article{Uniqueness3,
  author    = {Philippe Briand and Ying Hu},
  title     = {Quadratic {BSDEs} with convex generators and unbounded terminal conditions},
  journal   = {Probability Theory and Related Fields},
  volume    = {141},
  number    = {3},
  pages     = {543--567},
  year      = {2008},
  doi       = {10.1007/s00440-007-0093-y},
  url       = {https://doi.org/10.1007/s00440-007-0093-y}
}

\end{document}